\newtheorem{theorem}{Theorem}
\algnewcommand{\LeftComment}[1]{\State \(\triangleright\) #1}
\title{Directed Graph Auto-Encoders}
\date{}
\author{
  Georgios Kollias, 
  Vasileios Kalantzis, 
  Tsuyoshi Id\'e, 
  Aur\'elie Lozano, 
  Naoki Abe \\
  IBM Research\\
  T. J. Watson Research Center\\
  \texttt{\{gkollias, vkal, tide, aclozano, nabe\}@us.ibm.com}
}
\begin{document}
\maketitle

\begin{abstract}
We introduce a new class of auto-encoders for directed graphs, motivated by a direct extension of the Weisfeiler-Leman algorithm to pairs of node labels. The proposed model learns pairs of interpretable latent representations for the nodes of directed graphs, and uses parameterized graph convolutional network (GCN) layers for its  encoder and an asymmetric inner product decoder. Parameters in the encoder control the weighting of representations exchanged between neighboring nodes. We demonstrate the ability of the proposed model to learn meaningful latent embeddings and achieve superior performance on the directed link prediction task on several popular network datasets.
\end{abstract}

\section{Introduction}
Graph-structured data are ubiquitous, commonly encountered in diverse domains, ranging from biochemical interaction networks, to networks of social and economic transactions. A graph introduces dependencies between its connected nodes, thus algorithms designed to work solely with feature vectors of isolated nodes as inputs can yield suboptimal results. One way to remedy this issue, without reverting to more complex graph algorithms, is to enhance the representation of a graph node so that both its features and embedding graph structure are captured in a single vector.

Graph Convolutional Networks (GCNs) produce vectorial representations of nodes that are graph-aware and have been successfully used in downstream learning tasks including node classification, link prediction and graph classification. The construction of GCNs falls into two categories: spatial-based and spectral-based. Spatial-based GCNs are conveniently described as Message Passing Neural Networks (MPNNs) detailing the steps for aggregating information from neighbor graph nodes \cite{gilmer2017neural, micheli2009neural, niepert2016learning}. They adopt a local view of the graph structure around each node and are straightforward to describe and lightweight to compute; however they also need local customizations to enhance the performance of the representations they produce for downstream learning tasks \cite{velikovi2017graph}.  Spectral-based GCNs originate in graph signal processing perspectives \cite{bruna2014spectral}. They are based on the graph Laplacian, so they inherently adopt a global graph view
\cite{defferrard2016convolutional, kipf2016semi}. However they incur more computational cost, typically addressed by approximating their convolutional filter.

In this work we focus on GCN-based models for representing the nodes of directed graphs (encoding), so that we can faithfully reconstruct their directed edges (decoding). Our goal is to identify both whether two nodes $u$ and $v$ should be connected or not (which is the only goal for the undirected case) and whether their connection has the direction $u \mapsto v$ or $v \mapsto u$ or both. Many applications depend critically on this distinctionality. For example in (directed) citation graphs it cannot be the case that a publication cites a work that is published later in time, in (directed) causal graphs a causal node should prepend any of its effects, in knowledge graphs subject nodes are expected to point to object nodes. As a consequence, failing to identify the correct orientation even in a single edge could severely disrupt downstream tasks: time ordering can become contradicting, paths to root causes can be erroneously blocked, flow computations between entities can be totally wrong.

Our encoder follows a color refinement scheme for directed graphs that reduces to the standard Weisfeiler-Leman algorithm.
Coupled with an unsymmetric decoder, our directed graph auto-encoder can accurately infer missing directed links from the limited, incomplete graph it has access to during training time. 

Our contributions are two-fold. First, we propose a novel variant of the Weisfeiler-Leman algorithm that clearly emphasizes the dual role of directed graph nodes as both sources and targets of directed links. This abstracts, for the first time, the alternating update of authority and hub scalar values in HITS, and the relations between left and right singular vectors computed in SVD and the GCN-based approaches for directed graphs inspired by them. Second, we design parameterized GCN layers for updating the pair-of-vectors representation of the source and target roles of directed graph nodes in an alternating manner, and use these layers as the encoder in a directed graph auto-encoder architecture. We demonstrate that the parameterization we introduce is performance-critical for learning latent representations, and the proposed model can outperform state-of-the-art methods for 
the {\em directed} link prediction task 
on several popular citation network datasets in terms of area under the ROC curve (AUC) and average precision (AP) metrics.

The code is available at \url{https://github.com/gidiko/DiGAE}.

\section{Preliminaries and background}
We can generally describe the dependencies of a network as a directed graph $G(V, E, w)$, i.e., a weighted \emph{dependency graph}. Here $V$ is the set of $n=|V|$ graph nodes and $E = \{(i, j) \in V \times V: i \mapsto j\}$ is the set of its $m=|E|$ directed edges, expressed as node pairs. Finally $w: V \times V \rightarrow \mathbb{R}$ is the edge weight function, with $w(i, j)$ being a scalar capturing the ``strength'' of the dependency $i \mapsto j$ \emph{iff} $(i, j) \in E$ - and vanishing otherwise. 
Following a linear algebra perspective, we will represent $G(V, E, w)$ as an $n \times n$ sparse, weighted, adjacency matrix $\textbf{A}$. This matrix has $m$ non-vanishing entries and its $(i,j)$ entry is set equal to the respective weight $w(i, j)$, i.e.,  $\textbf{A}\texttt{[i, j]} = w(i, j)$.
Throughout the rest of this paper, we use $\tilde{\mathcal{N}}^{+}(i) = \mathcal{N}^{+}(i) 
\cup \{i\}$ ($\tilde{\mathcal{N}}^{-}(i) = \mathcal{N}^{-}(i) \cup \{i\}$) and
$\texttt{deg}^{+}(i) = |\tilde{\mathcal{N}}^{+}(i)|$ ($\texttt{deg}^{-}(i) = |\tilde{\mathcal{N}}^{-}(i)|$), to denote the neighbor node sets of the outgoing (incoming) edges and outgoing (incoming) degrees of a node $i$  - including itself. Analogously, 
the corresponding diagonal matrices with the outdegrees (indegrees) along their diagonal will be denoted as $\tilde{\textbf{D}}^{+}$ ($\tilde{\textbf{D}}^{-}$), where we top with a tilde mark the names of adjacency matrices with added self-links.   
Note that for undirected, unweighted graphs, the corresponding 
adjacency matrices are symmetric and binary, i.e., $\tilde{\textbf{D}}$ is a diagonal matrix where $\tilde{d}_{ii}$ is the original degree of node $i$ increased by $1$ (because of the added self-link).

\subsection{Dual vector encoding of directed graph nodes}

Consider a directed edge $i \mapsto j$, with weight $w(i, j)$, where  $i$ is the source node and $j$ is the target node. Now, let's assume that node $i$ is equipped with a pair of vectors in $\mathbb{R}^{k}$, $1\leq i \leq n$: (i) vector $\textbf{s}_i$ encodes $i$'s role as a source, which is the same for any of the directed edges it participates as a source, and (ii) vector $\textbf{t}_i$ encodes $i$'s role as a target; similarly for node $j$. The similarity of nodes $i$ and $j$ in building the weighted directed edge $i \mapsto j$ could then be captured by a similarity function $\texttt{sim}(\cdot, \cdot)$ which ideally evaluates to the true edge weight: 
$\texttt{sim}(\textbf{s}_i, \textbf{t}_j)=w(i,j)$.

An immediate choice for the similarity function is the dot product: $\texttt{sim}(\textbf{s}_i, \textbf{t}_j) = \textbf{s}_i^{\top} \textbf{t}_j$, with the encodings originally realized as column vectors. We then can compactly evaluate the weights of all edges, by buliding 
two matrices, $\textbf{S}$ and $\textbf{T}$, where $\textbf{S}\texttt{[i, :]} = \textbf{s}_i^{\top}$ and $\textbf{T}\texttt{[j, :]} = \textbf{t}_j^{\top}$, for all $1 \leq i, j \leq n$, and requiring $\textbf{A} = \textbf{S} \textbf{T}^{\top}$.
This particular choice has been explored in \cite{ou2016asymmetric}, which leverages the 
singular value decomposition (\texttt{SVD}) 
$\textbf{A} = \textbf{U} \mathbf{\Sigma} \textbf{V}^\top$ and sets $\textbf{S} = \textbf{U} \mathbf{\Sigma}^{\frac{1}{2}}$
and $\textbf{T} = \textbf{V} \mathbf{\Sigma}^{\frac{1}{2}}$. The authors also explore a 
variant based on truncated SVD. 

\subsection{Weisfeiler-Leman (WL) algorithm and connection to GCNs}
\subsubsection{One-dimensional Weisfeiler-Leman (1-WL)}
One dimensional Weisfeiler-Leman (1-WL)  algorithm is a well-studied approach for assigning distinct 
labels to the nodes of undirected, unweighted graphs with different topological roles \cite{weisfeiler1968reduction}. Given adjacency information in the form of neighborhood 
lists $\mathcal{N}(i) = [k: i \mapsto k \lor k \mapsto i]$, 
$\forall i \in [0, n)$,  1-dim WL iteratively updates a node's label by computing a bijective hash of its neigbors' labels, and mapping this to a unique label. The procedure 
terminates when the hash-and-map procedure stabilizes. A detailed sketch of the 
1-WL algorithm is provided in the Appendix.

\subsubsection{Graph Convolutional Network (GCN)}
Graph Convolutional Networks (GCNs) suggest convolution operators for graph signals defined 
over graph nodes. 
Following the early analysis in \cite{hammond2011wavelets} and the expansions in \cite{defferrard2016convolutional}, GCNs were particularly popularized by Kipf et al. \cite{kipf2016semi}. In particular, the work in \cite{kipf2016semi} focused on undirected, unweighted graphs, such that the adjacency matrices are symmetric and binary.
Each node $i$ is initially assumed encoded by a $k=k_0$ dimensional vector $\textbf{x}_i$, so all node encodings can be collected in an $n \times k$ 
matrix $\textbf{X} = \textbf{X}^{(0)}$. The goal is to \emph{transform} the node embeddings so 
that a downstream task such as node classification is more accurate. 

The proposed transformation contains a succession of \emph{graph convolutional layers} of the form:
\begin{equation}
\textbf{Z}^{(t)} \leftarrow {\tilde{\textbf{D}}^{-\frac{1}{2}}} \tilde{\textbf{A}} {\tilde{\textbf{D}}^{-\frac{1}{2}}} \textbf{X}^{(t)} \textbf{W}^{(t)}
\end{equation}
interspersed with \emph{nonlinear layers}, like $\texttt{ReLU}$ and $\texttt{softmax}$.
The quantity $\textbf{W}^{(t)}$ is a \emph{learnable} $k_t \times k_{t+1}$ matrix of weights for the $t^{th}$ \emph{graph convolutional layer} ($t=0,1, \ldots$). 
The algorithm in \cite{kipf2016semi} implements the transformation of the 
original encodings $\textbf{X}$:
\begin{equation}
\textbf{Z} \leftarrow \texttt{softmax}(\hat{\textbf{A}}\;\texttt{ReLU}(\hat{\textbf{A}}\;\textbf{X}\;\textbf{W}^{(0)})\;\textbf{W}^{(1)}),
\end{equation}
where $\hat{\textbf{A}}:= {\tilde{\textbf{D}}^{-\frac{1}{2}}} \tilde{\textbf{A}} {\tilde{\textbf{D}}^{-\frac{1}{2}}}$. 

\subsubsection{Connecting 1-WL to GCNs}
In \cite{morris2019weisfeiler} the connection of 1-WL to 1-GNNs is explored. Their basic 1-GNN model assumes the form
\begin{equation} \label{eq:1-GNN}
f^{(t)}(v) = \sigma\left( f^{(t-1)}(v)\cdot W_1^{(t)} + \sum_{w \in N(v)} f^{(t-1)}(w)\cdot W_2^{(t)}\right)
\end{equation}
In this, $f^{(t)}(v)$, which is the row feature vector of node $v$ at layer $t > 0$, is computed by aggregating its feature vector and the feature vectors of its neighbors at the previous layer $t-1$, after first multiplying them respectively by parameter matrices $W_1^{(t)}$ and $W_2^{(t)}$. It follows GCNs are 1-GNNs.
These results establish the connection of 1-WL to 1-GNNs (\ref{eq:1-GNN}):
\begin{itemize}
\item
  \textbf{[Theorem 1 in \cite{morris2019weisfeiler}]} For all $t \geq 0$ and for all choices of weights $\textbf{W}^{(t)} = {(W_1^{(t^{'})}, W_2^{(t^{'})})}_{t^{'} \leq t}$, coloring $c_{l}^{(t)}$ refines encoding $f^{(t)}$: $c_{l}^{(t)} \sqsubseteq f^{(t)}$. This means that for any nodes $u, w$ in $G_{st}$, $c_{l}^{(t)}(u) = c_{l}^{(t)}(w)$ implies $f^{(t)}(u) = f^{(t)}(w)$.
\item
  \textbf{[Theorem 2 in \cite{morris2019weisfeiler}]}
  For all $t \geq 0$  there exists a sequence of $\textbf{W}^{(t)}$ and a 1-GNN architecture such that the colorings and the encodings are equivalent: $c_{l}^{(t)} \equiv f^{(t)}$ (i.e. $c_{l}^{(t)} \sqsubseteq f^{(t)}$ and $f^{(t)} \sqsubseteq c_{l}^{(t)}$).
\end{itemize}

\section{DiGAE: Directed Graph Auto-Encoder}

\subsection{Weisfeiler-Leman (WL) algorithm and connection to encoder layers in DiGAE}
We now extend 1-WL for coloring pairs of node labels in directed graphs. We can formally prove that this extension reduces to standard 1-WL over a bipartite, undirected graph. This allows us to connect it to 1-GNNs with a special structure for the parameter matrices, which corresponds to the directed graph convolutional layers in the encoder module of our DiGAE architecture which we define next. For the case of shared weight matrices we can even remove this special structure requirement.    

\subsubsection{Coloring pairs of node labels in directed graphs}
To extend 1-WL for directed graphs, we equip each node of the graph with $2$ labels, one for capturing its role as a ``source'' of directed edges emanating from it and another one for its role as a ``target'' node for directed edges pointing to it. Then at each step, we propose the ``source'' label of a node to be a bijective function of its current ``source'' label and the multiset of ``target'' labels of the nodes it points to. In parallel, its ``target'' label will be updated to the bijective mapping of its current target label and the ``source'' labels of the nodes pointing to it. For the detailed algorithm, please refer to the Appendix.  
\subsubsection{Neural Source and Target encodings}
The propagation of ``source'' and ``target'' node labels along their directed edges, as in 
the extension of the 1-WL algorithm, suggests a new graph convolutional 
network layer. Similarly to the way the GCN layer is closely connected to standard 1-WL, 
we propose a novel \emph{directed graph convolutional layer} that transforms a pair of source 
and target encoding vectors for each node, in analogy to our extension for directed graphs. 
We assume that each node $i$ is originally encoded by a pair of vectors, its \emph{source} $\textbf{s}_i^{(0)}$ and \emph{target} $\textbf{t}_i^{(0)}$ encodings, and collect these encodings as rows in matrices $\textbf{S}^{(0)}$ and  $\textbf{T}^{(0)}$.

We can now define the $t^{th}$ graph convolutional layer for updating the \emph{source} encodings by aggregating the transformed and normalized \emph{target} encodings in the neighborhood as:
\begin{equation}
  \textbf{S}^{(t+1)} \leftarrow \left({\tilde{\textbf{D}}^{+}}\right)^{-\beta} \tilde{\textbf{A}}  \left({\tilde{\textbf{D}}^{-}}\right)^{-\alpha} \textbf{T}^{(t)}\textbf{W}_T^{(t)}
\end{equation}

\emph{In this work we propose tunable parameters $\alpha$ and $\beta$ for weighting the degrees in message passing.
  This subtle modification can have important effects in performance for the link prediction task as demonstrated in the experimental section.} 

Similarly we update the \emph{target} encodings by aggregating the transformed and normalized \emph{source} encodings in the neighborhood:
\begin{equation}
  \textbf{T}^{(t+1)} \leftarrow \left({\tilde{\textbf{D}}^{-}}\right)^{-\alpha}
  \tilde{\textbf{A}}^{\top}
  \left({\tilde{\textbf{D}}^{+}}\right)^{-\beta}
  \textbf{S}^{(t)}\textbf{W}_S^{(t)}
\end{equation}
where $\textbf{W}_T^{(t)}$ and $\textbf{W}_S^{(t)}$ are the (learnable) linear tranformations for target and encoding encodings prior to their propagation. 
Defining $\hat{\textbf{A}} = \left({\tilde{\textbf{D}}^{+}}\right)^{-\beta} \tilde{\textbf{A}}  \left({\tilde{\textbf{D}}^{-}}\right)^{-\alpha}$, we can 
compactly express our proposed graph convolutional layer by the following pair of tranformations:
\begin{equation} \label{eq:neural-encodings}
  \begin{split}
  \textbf{S}^{(t+1)} & \leftarrow  \hat{\textbf{A}} \; \textbf{T}^{(t)} \; \textbf{W}_T^{(t)} \\
  \textbf{T}^{(t+1)} & \leftarrow  \hat{\textbf{A}}^{\top} \; \textbf{S}^{(t)} \; \textbf{W}_S^{(t)}.
  \end{split}
\end{equation}

\subsubsection{Reduction to 1-WL}
We consider the transformation of our directed, unweighted graph $G(V, E)$ to its \emph{bipartite representation} $G_{st}(V_s, V_t, E_{st})$
with  $V_s = V = [0, n)$, $V_t = [n, 2 \; n)$ and $E_{st} = \{\{i, j + n\} | (i, j) \in E\}$ \cite{bang2008digraphs}.
The bipartite graph $G_{st}$ is assumed undirected and it follows that for the set $\mathcal{N}_{st}(v)$ of the immediate neighbors of any of its nodes $v \in V(G_{st}) = V_s \cup V_t$:

 \begin{equation}\label{eq:bipartite}
 \mathcal{N}_{st}(v)=
\left\{
	\begin{array}{ll}
		\{j+n | (v, j) \in E\}  & \mbox{if } 0 \leq v < n \\
		\{i | (i, v-n) \in E\} & \mbox{if } n \leq v < 2 
	\end{array}
\right.  
\end{equation}

We also assume that the $2 n$ nodes of $G_{st}$ are initially colored with two colors in the set $\{s, t\}$ according to a coloring function function $l : V_s\cup V_t \rightarrow \Sigma$ with $l(v)=s$, if  $0 \leq v < n$ and  $l(v)=t$, if  $n \leq v < 2 \;n$. In other words $(G_{st}, l)$ is a labeled graph, so we can directly apply the standard 1-WL iterative algorithm for color refinement. In each iteration $ t\geq 0$, 1-WL computes a node coloring $c_{l}^{(t)} : V(G_{st}) \rightarrow \Sigma$ with an arbitrary codomain of colors $\Sigma$. We initialize this sequence of colorings as $c_l^{(0)} = l$ and at $t > 0$, node coloring is updated as:
\begin{equation*} \label{eq:wl}
  c_l^{(t)}(v) = \mathtt{HASH}\left( (c_l^{(t-1)}(v), \{\{ c_l^{(t-1)}(w)| w \in \mathcal{N}_{st}(v)\}\})\right)
  \end{equation*}
Termination of this standard 1-WL algorithm is guaranteed after at most $|V(G_{st})|=2 \;n $ iterations and reached when $|c_l^{(t)}| = |c_l^{(t-1)}|$ for some $k$.

Essentially, the bipartite representation maps a node $i \in V$ in the directed graph $G$ to a pair of nodes $(i \in V_s, i+n \in V_t)$ in the undirected, bipartite graph $G_{st}$: $i \in V_s$ represents $i \in V$ as a \emph{source} of directed edges in $G$ and $i+n \in V_t$ represents $i \in V$ as a \emph{target} of directed edges in $G$. In addition, identical color pair labels $(s, t)$, initially assigned to all nodes $i \in  V$ (in the directed graph $G$) are mapped to: (i) an initial label $s$ for node $i \in V_s$ and (ii) an initial label $t$ for node $i+n \in V_t$ (in the undirected, bipartite graph $G_{st}$). So in our refinement scheme for a node $i$ in $G$, it holds $(c_{l, s}^{(0)}(i), c_{l, t}^{(0)}(i)) = (s, t)$ while for its corresponding nodes $i$, $i+n$ in $G_{st}$, $c_l^{(0)}(i)=s$ and $c_l^{(0)}(i+n)=t$.

We can now prove that the color pairs computed at each step $k$ and for each node $i$ by our refinement scheme on a directed graph $G$ can equivalently be computed by standard 1-WL on the undirected, bipartite $G_{st}$ by pairing the colors of its nodes $i$ and $i+n$. In other words:

\begin{theorem}
\begin{equation}\label{eq:induction-step}
  c_{l, s}^{(t)}(i) = c_l^{(t)}(i), c_{l, t}^{(t)}(i) = c_l^{(t)}(i+n)
\end{equation}

for all layers $t\geq 0$ and $i \in [0, n)$.
\end{theorem}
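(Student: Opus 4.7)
The statement is naturally set up for an induction on the iteration index $t$, because both sides are defined by the same kind of HASH-and-map recursion, only on two different graphs. The plan is therefore to run a straightforward induction on $t$, using the bipartite neighborhood identity in equation (\ref{eq:bipartite}) as the bridge that converts ``outgoing neighbors of $i$ in $G$'' into ``neighbors of node $i$ in $G_{st}$'' and ``incoming neighbors of $i$ in $G$'' into ``neighbors of node $i+n$ in $G_{st}$''.

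For the base case $t=0$, I would just unwrap the definitions: the extended scheme on $G$ initializes every node $i$ with the pair $(c_{l,s}^{(0)}(i), c_{l,t}^{(0)}(i)) = (s,t)$, while the initial labeling $l$ on $G_{st}$ gives $c_l^{(0)}(i) = s$ for $i \in V_s$ and $c_l^{(0)}(i+n) = t$ for $i+n \in V_t$, matching (\ref{eq:induction-step}) at $t=0$.

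For the inductive step, assume (\ref{eq:induction-step}) holds at step $t$. I would write down the update rule of the extended directed scheme for node $i$,
\[
c_{l,s}^{(t+1)}(i) = \mathtt{HASH}\!\left((c_{l,s}^{(t)}(i), \{\!\{c_{l,t}^{(t)}(j) : (i,j)\in E\}\!\})\right),
\]
and compare it against the standard 1-WL update applied to the source-side vertex $i \in V_s$ of $G_{st}$,
\[
c_l^{(t+1)}(i) = \mathtt{HASH}\!\left((c_l^{(t)}(i), \{\!\{c_l^{(t)}(w) : w \in \mathcal{N}_{st}(i)\}\!\})\right).
\]
By (\ref{eq:bipartite}) the neighborhood $\mathcal{N}_{st}(i)$ is exactly $\{j+n : (i,j)\in E\}$, so the multiset inside the HASH becomes $\{\!\{c_l^{(t)}(j+n)\}\!\}$, which by the inductive hypothesis equals $\{\!\{c_{l,t}^{(t)}(j)\}\!\}$; the scalar first argument also matches because $c_l^{(t)}(i) = c_{l,s}^{(t)}(i)$. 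Assuming HASH is fixed across both runs (as the statement tacitly does, since it asks for equality of colors, not just equality up to refinement), the two HASH inputs are identical, hence so are the outputs. An entirely symmetric computation on the target side, using the other branch of (\ref{eq:bipartite}) which gives $\mathcal{N}_{st}(i+n) = \{j : (j,i)\in E\}$, handles the second equality $c_{l,t}^{(t+1)}(i) = c_l^{(t+1)}(i+n)$.

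There is essentially no hard step here; the only thing one must be careful about is that the two sides use a common HASH and that the initial labels of the directed scheme are identified with the bipartition labels $\{s,t\}$ in a consistent way, which is precisely what the construction of $(G_{st}, l)$ arranges. The main conceptual observation the proof is making is already in the paper's text: a directed edge $(i,j) \in E$ contributes to exactly one outgoing bag at $i$ and one incoming bag at $j$ in the directed scheme, and to exactly the two corresponding bags $\mathcal{N}_{st}(i) \ni j+n$ and $\mathcal{N}_{st}(j+n) \ni i$ in $G_{st}$, so the induction propagates cleanly with no combinatorial overhead.
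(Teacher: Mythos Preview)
Your proposal is correct and mirrors the paper's own proof essentially step for step: induction on $t$, the base case by the initialization convention, and the inductive step by invoking the bipartite neighborhood identity (\ref{eq:bipartite}) to rewrite the 1-WL update on $G_{st}$ so that the HASH arguments coincide with those of the directed refinement, then handling the target side symmetrically. Your added remark that a common $\mathtt{HASH}$ is tacitly assumed is a fair clarification but does not change the approach.
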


\begin{proof}
We use induction. For $t=0$ this holds by our initialization convention: $c_{l, s}^{(0)}(i) = c_l^{(0)}(i) = s$ and  $c_{l, t}^{(0)}(i) = c_l^{(0)}(i+n) = t$.

Let us now assume that this also holds for some $t$ and find out for $t+1$.
Consider a node with index $i \in [0, n)$. Its neighbors $w = j + n \in \mathcal{N}_{st}(i)$ in $G_{st}$ map to the outlink-neighbors $j \in \mathcal{N}^{+}(i)$ in $G$ according to our bipartite representation (see Equation (\ref{eq:bipartite}). These neighbors $w=j+n \in \mathcal{N}_{st}(i)$ have color labels  $c_l^{(t)}(j+n)$ which according to (\ref{eq:induction-step}) are equal to $c_{l, t}^{(t)}(i)$: 
\begin{equation}\label{eq:induction-source}
  \begin{split}
    c_l^{(t+1)}(i) = & \mathtt{HASH}
    (
    (c_l^{(t)}(i), \{\{ c_l^{(t)}(w)| w \in \mathcal{N}_{st}(i)\}\})
    ) = \\
    & \mathtt{HASH}
    (
    (c_{l, s}^{(t)}(i), \{\{c_{l, t}^{(t)}(j)| j \in \mathcal{N}^{+}(i)\}\})
    )
  \end{split}
\end{equation}

Similarly for a node with index $i + n \in [n, 2 n)$, its neighbors $w = j \in \mathcal{N}_{st}(i)$ in $G_{st}$ map to the inlink-neighbors $j \in \mathcal{N}^{-}(i-n)$ in $G$ according to Equation \ref{eq:bipartite}. The node with index  $i + n$ has color label equal to $c_{l, t}^{(t)}$ and the inlink neighbors $j$, since $j \in [0, n)$ have color labels equal to  $c_{l, s}^{(t)}$ (see \ref{eq:induction-step}): 
\begin{equation}\label{eq:induction-target}
  \begin{split}
    c_l^{(t+1)}(i + n) = & \mathtt{HASH}
    (
    (c_l^{(t)}(i + 1), \{\{ c_l^{(t)}(w)| w \in \mathcal{N}_{st}(i)\}\})
    ) = \\
    & \mathtt{HASH}
    (
    (c_{l,t}^{(t)}(i), \{\{c_{l, s}^{(t)}(j)| j \in \mathcal{N}^{-}(i)\}\})
    )
  \end{split}
\end{equation}

According to our extension to 1-WL, the refinement we propose reads:
$c_{l, s}^{(t+1)}(i) = \mathtt{HASH}\left( (c_{l, s}^{(t)}(i), \{\{c_{l, t}^{(t)}(j)| j \in \mathcal{N}^{+}(i)\}\})\right)$ and
$c_{l, t}^{(t+1)}(i) = \mathtt{HASH}\left( (c_{l, t}^{(t)}(i), \{\{c_{l, s}^{(t)}(j)| j \in \mathcal{N}^{-}(i)\}\})\right)$.
By comparing with equations (\ref{eq:induction-source}) and (\ref{eq:induction-target}) it follows that
$c_{l, s}^{(t+1)} = c_l^{(t+1)}(i)$ and $c_{l, t}^{(t+1)} = c_l^{(t+1)}(i+n)$ which completes the proof.
\end{proof}

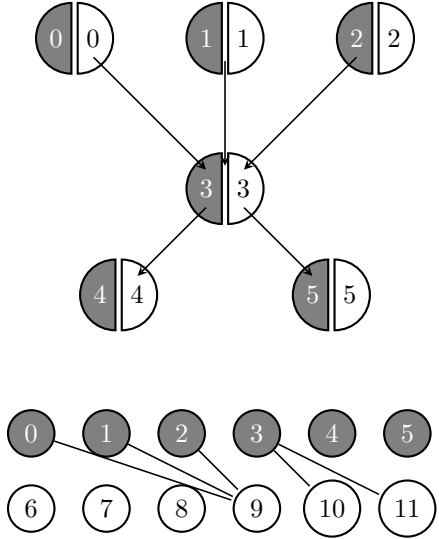
\begin{figure}
  \centering
  \begin{tikzpicture}[
    scale=0.2,
    > = stealth, 
    shorten > = 1pt, 
    auto,
    node distance = 2cm, 
    semithick 
    ]

  \tikzstyle{every state}=[
  draw = black,
  thick,
  fill = white,
  minimum size = 4mm
  ]

  \tikzset{
    semi/.style={
      semicircle,
      draw,
      minimum size=4mm
    }
  }

  \node (3) {};
  \node[state, semi, shape border rotate=90,  fill=gray, text=white, left=-1mm of 3] {$3$};
  \node[state, semi, shape border rotate=270, fill=white, text=black, right=-1mm of 3] {$3$};

  \node (1) [above of=3]{};
  \node[state, semi, shape border rotate=90,  fill=gray, text=white, left=-1mm of 1] {$1$};
  \node[state, semi, shape border rotate=270, fill=white, text=black, right=-1mm of 1] {$1$};
 
  \node (0) [left of=1]{};
  \node[state, semi, shape border rotate=90,  fill=gray, text=white, left=-1mm of 0] {$0$};
  \node[state, semi, shape border rotate=270, fill=white, text=black, right=-1mm of 0] {$0$};

  \node (2) [right of=1]{};
  \node[state, semi, shape border rotate=90,  fill=gray, text=white, left=-1mm of 2] {$2$};
  \node[state, semi, shape border rotate=270, fill=white, text=black, right=-1mm of 2] {$2$};

  \node (4) [below left of=3]{};
  \node[state, semi, shape border rotate=90,  fill=gray, text=white, left=-1mm of 4] {$4$};
  \node[state, semi, shape border rotate=270, fill=white, text=black, right=-1mm of 4] {$4$};

  \node (5) [below right of=3]{};
  \node[state, semi, shape border rotate=90,  fill=gray, text=white, left=-1mm of 5] {$5$};
  \node[state, semi, shape border rotate=270, fill=white, text=black, right=-1mm of 5] {$5$};
 
  \begin{scope}[> = stealth, semithick ,shorten >= 5pt, shorten <= 5pt]
  \path[->] (0) edge node {} (3);
  \path[->] (1) edge node {} (3);
  \path[->] (2) edge node {} (3);
  \path[->] (3) edge node {} (4);
  \path[->] (3) edge node {} (5);
  \end{scope}
\end{tikzpicture}

\vspace{1cm}

\begin{tikzpicture}[
  scale = 0.2,
  > = stealth, 
  shorten > = 1pt, 
  auto,
  node distance = 1cm, 
  semithick 
  ]
  
  \tikzstyle{every state}=[
  draw = black,
  thick,
  fill = white,
  minimum size = 4mm
  ]

  \node[state, fill=gray, text=white] (0) {$0$};
  \node[state, fill=gray, text=white] (1) [right of=0] {$1$};
  \node[state, fill=gray, text=white] (2) [right of=1] {$2$};
  \node[state, fill=gray, text=white] (3) [right of=2] {$3$};
  \node[state, fill=gray, text=white] (4) [right of=3] {$4$};
  \node[state, fill=gray, text=white] (5) [right of=4] {$5$};

  \node[state] (6)  [below of=0] {$6$};
  \node[state] (7)  [below of=1] {$7$};
  \node[state] (8)  [below of=2] {$8$};
  \node[state] (9)  [below of=3] {$9$};
  \node[state] (10) [below of=4] {$10$};
  \node[state] (11) [below of=5] {$11$};

  \path[-] (0) edge node {} (9);
  \path[-] (1) edge node {} (9);
  \path[-] (2) edge node {} (9);
  \path[-] (3) edge node {} (10);
  \path[-] (3) edge node {} (11);
\end{tikzpicture}
\caption{\footnotesize An example directed graph $G$ with $n=6$ nodes (top) and its bipartite representation $G_{st}$ (bottom). Both graphs are colored with each node in $G$ carrying a pair of source and target colors - respectively in left and right semidiscs. The example node $3$ in $G$ maps to nodes $3$ and $9$ in $G_{st}$.}  
\label{fig:bipartite-representation}
\end{figure}

\subsubsection{Connecting 1-WL to the encoder layers in DiGAE}
We can encode the initial labels $c_l^{(0)}(v)$ of the nodes in our bipartite representation $G_{st}$ by considering two arbitrary, nonequal, vectors $\textbf{s}$, $\textbf{t}$ in $\mathbb{R}^{e \times 1}$ and setting $f^{(0)}(v) = \textbf{s}^{\top}$  for nodes $v \in [0, n)$  and $f^{(0)}(v) = \textbf{t}^{\top}$ for nodes $v \in [n, 2 n)$;
$e$ can be as small as $1$ (i.e. for one-hot encoding). These encodings are consistent with the initial labels (i.e. they are different for nodes with different initial colors). Given the consistent encodings and our reduction of our color refinement to 1-WL for $G_{st}$, two results readily apply, from theorems 1 and 2 in \cite{morris2019weisfeiler}. These results readily establish the connection of our 1-WL to 1-GNNs (\ref{eq:1-GNN}).

Alternatively, we can encode the initial labels $c_l^{(0)}(v)$ of the nodes in our bipartite representation $G_{st}$ by vectors $f^{(0)}(v)$ as follows. For nodes $v \in [0, n)$ we assume the  encoding  $f^{(0)}(v) = [\textbf{s}^{\top}, \textbf{0}]$  and for nodes $v \in [n, 2 n)$  we set  $f^{(0)}(v) = [\textbf{0}, \textbf{t}^{\top}]$ where  $\textbf{0} \in \mathbb{R}^{1 \times e}$ denotes the zero row vector. Our encodings are in $\mathbb{R}^{1 \times 2 e}$ and they are also consistent with our initial labels. In this encoding scheme, the first $e$ elements (last $e$ elements) of the initial feature vector $f^{(0)}(v)$ are non-vanishing for nodes in $G_{st}$ capturing the source (target) role of nodes in $G$.

We can ensure this property is preserved for feature vectors $f^{(t)}(v), t > 0$ under the iteration in (\ref{eq:1-GNN}) by additionally selecting block-diagonal matrices for $W_1^{(t)}$ and block-antidiagonal matrices for $W_2^{(t)}$:
\begin{equation}
  W_1^{(t)} =
  \begin{pmatrix}
    W_{1S}^{(t)} &  O \\
    O & W_{1T}^{(t)}
  \end{pmatrix},
  W_2^{(t)} =
  \begin{pmatrix}
    O & W_{2S}^{(t)}\\
    W_{2T}^{(t)} &  O 
  \end{pmatrix}
\end{equation}

where $W_{1S}^{(t)}, W_{1T}^{(t)}, W_{2S}^{(t)}, W_{2T}^{(t)} \in \mathbb{R}^{e \times e}$ and $O=O^{e \times e}$ is the zero matrix.

This is straightforward to verify. For a node in $i \in [0, n)$ for which $f^{(t-1)}(i) = [\textbf{s}^{(t-1)}_i, \textbf{0}]$ its neighbors $j \in [n, 2 n)$ will have encodings of the form $f^{(t-1)}(v) = [\textbf{0}, \textbf{t}^{(t-1)}_j]$. Substitution in Equation (\ref{eq:1-GNN}) yields:
\begin{equation*}
  \begin{split}
    f^{(t)}(i) = 
    [\sigma(
    {\textbf{s}^{(t-1)}_i}^{\top} \cdot W_{1S}^{(t)} +
    \sum_{j \in N(i)} {\textbf{t}^{(t-1)}_j}^{\top} \cdot W_{2T}^{(t)}
    ), \textbf{0}^{1 \times e}]
  \end{split}
\end{equation*}
so $f^{(t)}(i)$ still has only its \emph{first} $e$ elements non-vanishing, the same as  $f^{(t-1)}(i)$.
Similarly for $i \in [n, 2 n)$ we get: $f^{(t)}(i) = 
    [\sigma(\textbf{0}^{1 \times e},      {\textbf{t}^{(t-1)}_i}^{\top} \cdot W_{1T}^{(t)} +
   \sum_{j \in N(i)} {\textbf{s}^{(t-1)}_j}^{\top} \cdot W_{2S}^{(t)}     )]$, 
so $f^{(t)}(i)$ still has only its \emph{last} $e$ elements non-vanishing, the same as  $f^{(t-1)}(i)$.

   The motivation behind the 
   selection of block-structured parameter matrices $W_1^{(t)}$ and $W_2^{(t)}$, is for allowing the extra flexibility of learning different sets of parameters for nodes with source and target roles (in undirected $G_{st}$). By removing self-links and setting $W_{1S}^{(t)}= W_{1T}^{(t)}=O^{e\times e}$, these vectors $f^{(t)}(i)$ can be seen to map directly to rows of matrices $\textbf{S}^{(t)}$ and $\textbf{T}^{(t)}$ in Equation  (\ref{eq:neural-encodings}). Alternatively, we can consider general (non-structured) $W_1^{(t)}$ and $W_2^{(t)}$ in our 1-GNN which corresponds to sharing learnt parameters for neural source and target encodings.

\subsection{Directed Graph AutoEncoder (DiGAE) model}

In analogy to the Graph AutoEncoder (GAE) model in \cite{kipf2016variational}, we can define its directed variant (DiGAE) by stacking two directed graph convolutional layers connected with \texttt{ReLU} nonlinearity for its \emph{encoder} and a \texttt{sigmoid} applied to the inner product of the source and target encodings for its \emph{decoder}. In particular, the \emph{encoder} reads
\begin{equation}
\begin{split}
\textbf{Z}_S &= \hat{\textbf{A}} \texttt{ReLU}(\hat{\textbf{A}}^{\top} \; \textbf{S}^{(0)} \; \textbf{W}_S^{(0)}) \textbf{W}_T^{(1)} \\
\textbf{Z}_T &= \hat{\textbf{A}}^{\top} \texttt{ReLU}(\hat{\textbf{A}} \; \textbf{T}^{(0)} \; \textbf{W}_T^{(0)}) \textbf{W}_S^{(1)}
\end{split}
\end{equation}

while the decoder for computing the reconstructed adjacency matrix $\bar{\textbf{A}}$ is

\begin{equation}
  \bar{\textbf{A}} = \sigma(\textbf{Z}_S \; \textbf{Z}_T^{\top})
  \label{eq:directed-decoder}
\end{equation}

In the sequel, we experiment with the single-layer DiGAE model, referred to as \emph{DiGAE-1L}. This autoencoder has the same decoder as DiGAE and its encoder implements the pair of transformations
$\textbf{Z}_S = \textbf{S}^{(1)} = \hat{\textbf{A}} \textbf{T}^{(0)} \textbf{W}_T^{(0)}$ and
$\textbf{Z}_T = \textbf{T}^{(1)} = \hat{\textbf{A}}^{\top} \textbf{S}^{(0)} \textbf{W}_S^{(0)}$.

\section{Related work}

Ma et al \cite{ma2019spectral} formally extend convolution to directed graphs by defining a normalized, symmetric directed Laplacian and leveraging the Perron vector of the induced transition probability matrix for weighing messaged passing.
In \cite{tong2020digraph}, Tong et al approximate the digraph Laplacian by means of Personalized PageRank, and this relaxed definition offers performance benefits and the ability to process directed graphs that are not necessarily strongly connected. 
The authors of Fast Directed GCN in \cite{li2020scalable} also approximate the digraph Laplacian, by assuming the trivial Perron vector for regular graphs.
In \cite{monti2018motifnet}, a polynomial of selected motif Laplacians is used to filter node representations. Most notably, all aforementioned spectral-based extensions to convolution in directed graphs, produce single-vector representations, so the dual nature of a node as both a source and target of directed edges is not captured; we do not require the separate computation of the Perron vector and a scheme for weighing messages is automatically learnt. For convolution in directed knowledge graphs, we refer to \cite{kampffmeyer2019rethinking,schlichtkrull2018modeling}.

In \cite{tong2020directed}, first and second order proximity kernels for directed graphs are combined to produce single-vector representations for graph nodes. Second order proximity  kernels normalize the products of $\textbf{A}$ and its transpose, and produce dual-vector intermediate representations, similarly to the work proposed in this paper. However, normalization of the products is graph-agnostic, the intermediate sparse matrix products they employ are computationally costly with potentiallly dense matrix outputs, the first order proximity 
kernels is symmetric.
High-Order Proximity preserved Embeddings (HOPE) in \cite{ou2016asymmetric} rely on matrix factorization (SVD) of a higher order proximity matrix while Asymmetric Proximity Preserving (APP) embeddings in \cite{zhou2017scalable} rely on random walks with restart. 
These encoders are not GCN-based which is the focus in our work.
Dual-vector representation can be enforced artificially: in \cite{salha2019gravity}, source/target GAE and VGAE models are based on graph auto-encoders for undirected graphs from the seminal work of Kipf and Welling \cite{kipf2016variational}, where the single-vector representation (of even size) is assumed to be the concatenation of two-identically sized parts
The gravity-inspired directed GCN architecture in \cite{salha2019gravity} on the other hand produces single vector encodings using fixed normalization for the messages and embeds asymmetry in only one of its entries by fusing the importance of the target node and the distance of the node encodings at the ends of the directed edge.

The directed graph can be of a special kind. Message passing in GCNs for Directed Acyclic Graphs (DAGs)
is explored in \cite{thost2021directed}. Or the directed edges can carry labels of different types in which case the undirected GCN can be extended to include terms denoting the additional aggregation of edge features \cite{jaume2019edgnn}, in close analogy to the extension of WL test to a directed graph with edge labels \cite{orsini2015graph, grohe2017color}. In another interesting view, the directed graph can first be converted to an undirected bipartite graph as in \cite{zhou2005semi} driven by the source-target node duality.

\section{Experiments}

In this section we demonstrate the performance of the proposed approach on the {\em directed} link prediction task 
associated with two different datasets: (a) namely CoraML (2,995 nodes, 8,416 edges, 2,879 features), and (b) CiteSeer (3,312 nodes, 4,715 edges, 3,703 features).
The CoraML dataset 
contains machine learning publications grouped into seven  classes.  The  CiteSeer  dataset  
contains  scientific papers  grouped  into  six  classes.  Each  paper  in  CoraML  and CiteSeer 
is  represented  by  a  one-hot  vector  indicating  the presence or absence of a word from a dictionary.

We employ grid search for hyperparameter tuning: learning rate $\eta \in \{0.005, 0.01\}$, hidden layer dimension $d \in \{32, 64\}$ with $d/2$ for the latent space dimension, $(\alpha, \beta) \in \{0.0, 0.2, 0.4, 0.6, 0.8\}^{2}$ for DiGAE models and parameter $\lambda \in \{0.1, 1.0, 10.0\}$ for Gravity GAE. For the final models we select hyperparameter values that maximize mean AUC computed on the validation set. In all cases, models are trained for 
$200$ epochs, using Adam optimizer, without dropout, performing full-batch gradient descent.

We use Python and especially the PyTorch library and PyTorch Geometric \cite{Fey/Lenssen/2019}, which is a geometric deep learning extension library. We ran experiments for all models (ours and baselines) on a system equipped with an Intel(R) Core(TM) i7-8850H CPU @2.60GHz (6 cores/12 threads) and 32 GB of DDR4 memory @2400 MHz.

We consider the following directed link prediction task 
adapting the description in \cite{kipf2016variational} to digraphs. 
\paragraph{\textbf{Task: Directed link prediction}}
We randomly remove $15\%$ of the directed edges from the graph and train models on the remaining edge set. 
Two thirds of the removed edges (i.e. $10\%$ of all input graph edges) are used as actual edges for testing, one third of them (i.e. $5\%$ of all input graph edges) as actual edges for validation. These test and validation sets also include the same number of fake directed edges (negative samples), as their actual ones: negative samples are generated by randomly connecting pairs of nodes which are not wired in the input graph.
Validation sets are only used for hyperparameter tuning.

\subsection{Results and Discussion}
We compare the performance of our DiGAE models to Standard GAE in \cite{kipf2016variational}, Source/Target GAE and Gravity GAE both in \cite{salha2019gravity}. Similarly to our models, these baselines are GCN-based.
We use the TensorFlow implementations of baseline models from the 
authors of \cite{salha2019gravity}.\footnote{\url{https://github.com/deezer/gravity_graph_autoencoders}}

We run a series of twenty experiments for each graph and selected model combination, and report the mean and standard deviation of the AUC and AP metrics, as well as their respective timings. Random train and test graph dataset splits are used for each such experiment and metrics are averaged over the set, to account for their sensitivity to the choice of the split, as empirically observed in \cite{shchur2018pitfalls}.

Results
with node features supplied in all cases (feature-based configurations), are summarized in Table \ref{tab:task_1_features}.
We mark in bold the largest entry (or largest entries in the case of their overlapping intervals).

Mean AUC values for DiGAE-1L in particular consistently outperform other baselines for the citation graphs and the margin can be significant: in  CiteSeer this is of the order of $6\%$ for Gravity GAE and up to $18\%$ compared to Standard GAE. For mean AP, margins are similarly in the $4\%$ to $19\%$ range, or up to $3\%$ for CoraML. Varying $(\alpha, \beta)$ pair values has significant impact on the reconstruction metrics. As an example, the mean AUC can be as low as $73.73\%$ and $86.10\%$ for DiGAE-1L  respectively for CoraML and CiteSeer for the selected $\eta$ and $d$ but for suboptimal $(\alpha, \beta)$ within the search grid. In the Appendix we include metrics tables for the full $(\alpha, \beta)$ grid collected during hyperparameter tuning.
In experiments with citation graphs, DiGAE-1L is markedly faster by factors in the range $\times 5$ to $\times 15$. This is partly due to the different implementations, the fact that this is a single-layer only and for CiteSeer in particular to the smaller size for the output vector from hyperparameter tuning ($16$ vs $32$ for a hidden encoding of $64$ entries for baselines). Gravity GAE is the slowest, mainly due to the complexity of its decoder (pairwise distance computation).

The authors in \cite{salha2019gravity} originally used the baselines with one-hot encoding of the nodes (feature-less configurations).
For citation networks, node features capture similarity which is symmetric and this could conceptually hinder the identification of directionality in predicted links as in our task, which is an inherently asymmetric relation. For this reason, we also tested feature-less configurations for the baselines and interestingly got comparable results to the feature-based case (Table \ref{tab:task_1_gravity}): our DiGAE-1L models from (Table \ref{tab:task_1_features}) are top performers for both CoraML and CiteSeer datasets.
Experiments with datasets from the WebKB collection and Pubmed are included in the Appendix.
\begin{table}[ht]
\begin{center}
  \caption{\footnotesize General directed link prediction for the \emph{citation} graphs (feature-based configurations).}
 \label{tab:task_1_features}
\begin{tabular}{lllll}
\toprule
 Dataset &                Model &            AUC &             AP &            Time (secs) \\
\midrule
  CoraML &         DiGAE (ours) & 88.10 +/- 1.70 & 89.83 +/- 1.39 &   9.56 +/- 0.48 \\
  CoraML & DiGAE 1-Layer (ours) & \textbf{94.09 +/- 0.66} & \textbf{94.10 +/- 0.77} &   7.64 +/- 0.21 \\
  CoraML &          Gravity GAE & 92.35 +/- 0.57 & \textbf{94.17 +/- 0.53} &  51.56 +/- 0.13 \\
  CoraML &    Source/Target GAE & 91.66 +/- 0.52 & 92.60 +/- 0.47 &  36.88 +/- 0.20 \\
  CoraML &         Standard GAE & 89.54 +/- 1.14 & 91.40 +/- 1.11 &  36.55 +/- 0.21 \\
\midrule
CiteSeer &         DiGAE (ours) & 92.05 +/- 1.06 & 92.29 +/- 0.97 &  11.10 +/- 0.22 \\
CiteSeer & DiGAE 1-Layer (ours) & \textbf{92.76 +/- 0.87} & \textbf{92.57 +/- 1.08} &   4.12 +/- 0.11 \\
CiteSeer &          Gravity GAE & 86.79 +/- 0.98 & 88.60 +/- 1.05 & 106.44 +/- 0.16 \\
CiteSeer &    Source/Target GAE & 82.90 +/- 1.79 & 84.70 +/- 1.47 &  73.46 +/- 0.21 \\
CiteSeer &         Standard GAE & 74.35 +/- 1.77 & 80.96 +/- 1.26 &  73.84 +/- 0.48 \\
\bottomrule
\end{tabular}
\end{center}
\end{table}

\begin{table}[ht]
  \begin{center}
  \caption{\footnotesize General directed link prediction for the \emph{citation} graphs (feature-less configurations).}
\label{tab:task_1_gravity}
  \begin{tabular}{llll}
\toprule
 Dataset &             Model &            AUC &             AP \\
\midrule
  CoraML &       Gravity GAE & 91.56 +/- 0.71 & 93.62 +/- 0.59 \\
  CoraML & Source/Target GAE & 91.85 +/- 0.58 & 92.91 +/- 0.64 \\
  CoraML &      Standard GAE & 89.42 +/- 0.84 & 92.09 +/- 0.64 \\
  \midrule
CiteSeer &       Gravity GAE & 87.05 +/- 0.99 & 88.88 +/- 0.96 \\
CiteSeer & Source/Target GAE & 82.39 +/- 1.56 & 84.38 +/- 1.57 \\
CiteSeer &      Standard GAE & 73.81 +/- 1.37 & 80.52 +/- 1.13 \\
\bottomrule
\end{tabular}
\end{center}
\end{table}
\paragraph{\textbf{Truncated SVD baseline}}
\begin{figure}[ht]
  \centering
  \includegraphics[width=0.7\textwidth]{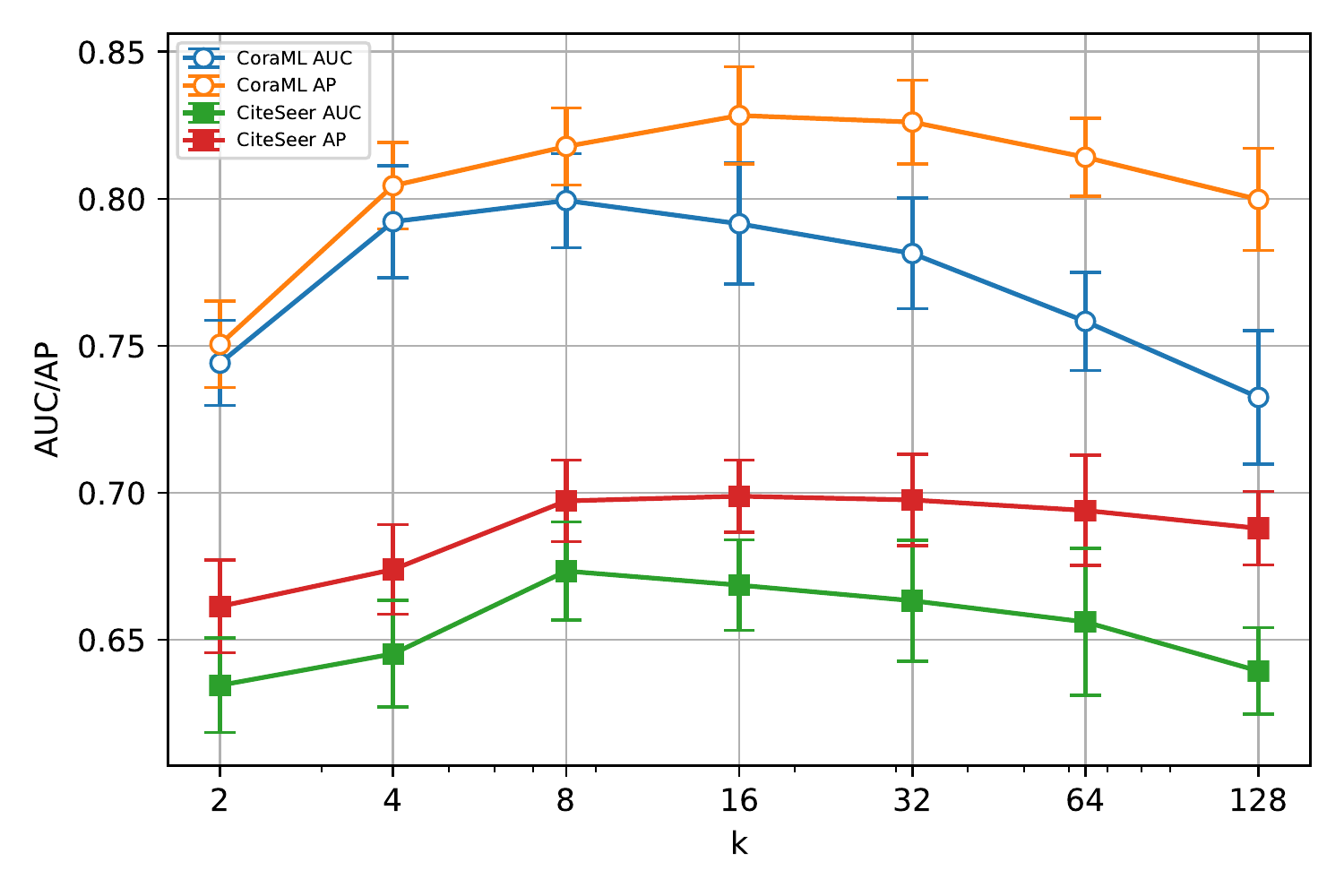}
  \caption{\footnotesize Truncated SVD based directed link prediction: AUC and AP metrics for CoraML and CiteSeer.}
  \label{fig:truncated-svd-baseline}
\end{figure}

\begin{figure*}[ht]
  \centering
  \includegraphics[width=0.30\textwidth]{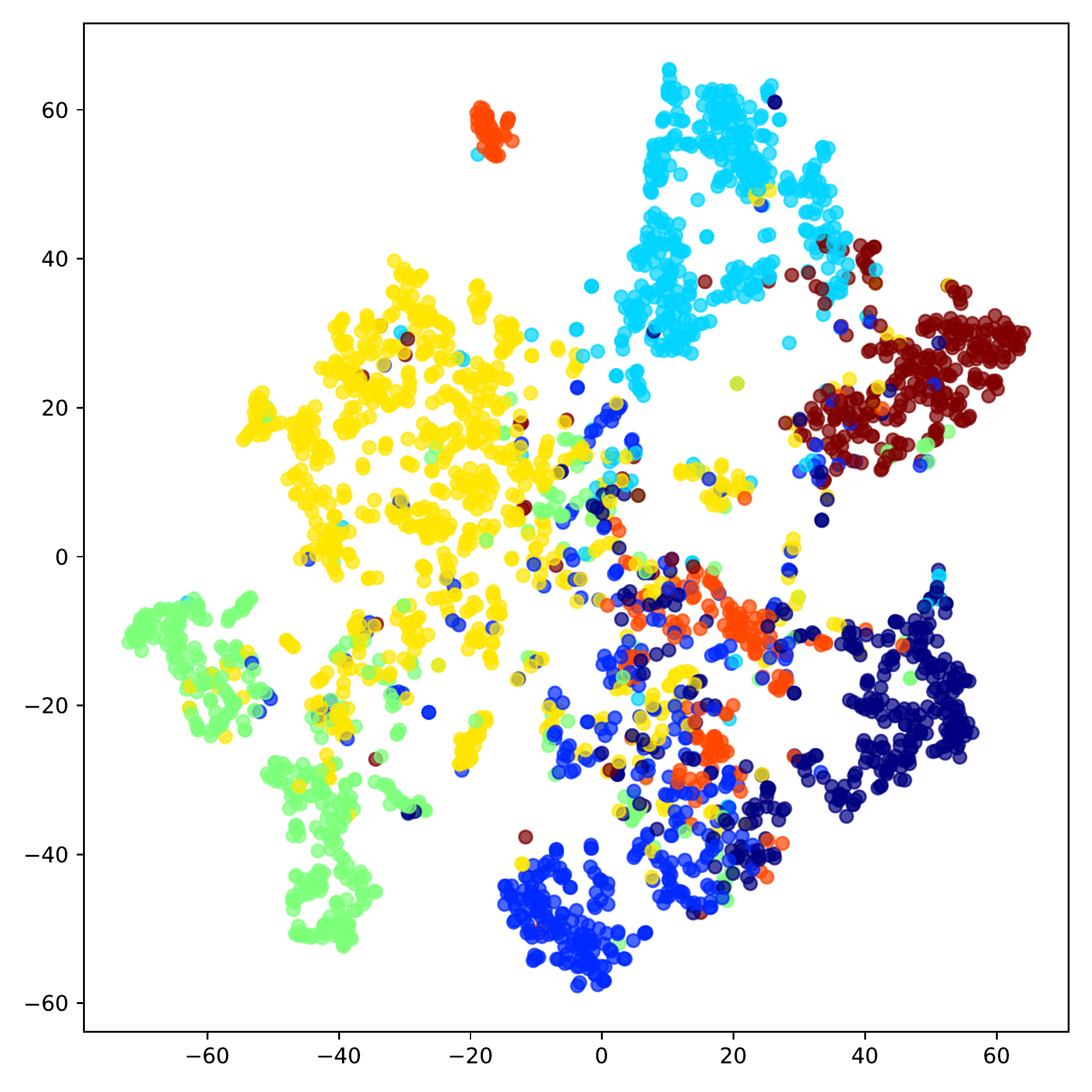}
  \includegraphics[width=0.30\textwidth]{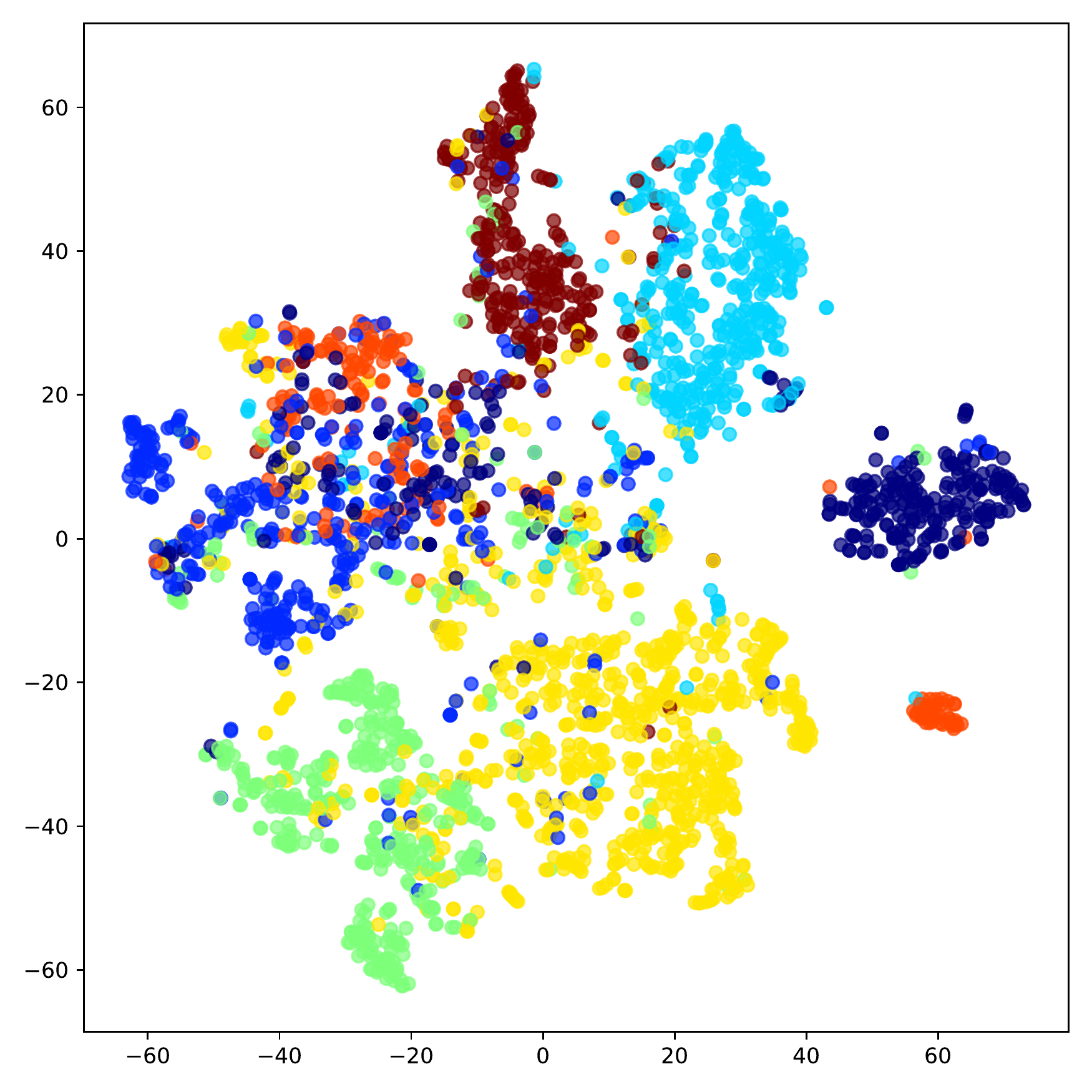}
  \includegraphics[width=0.30\textwidth]{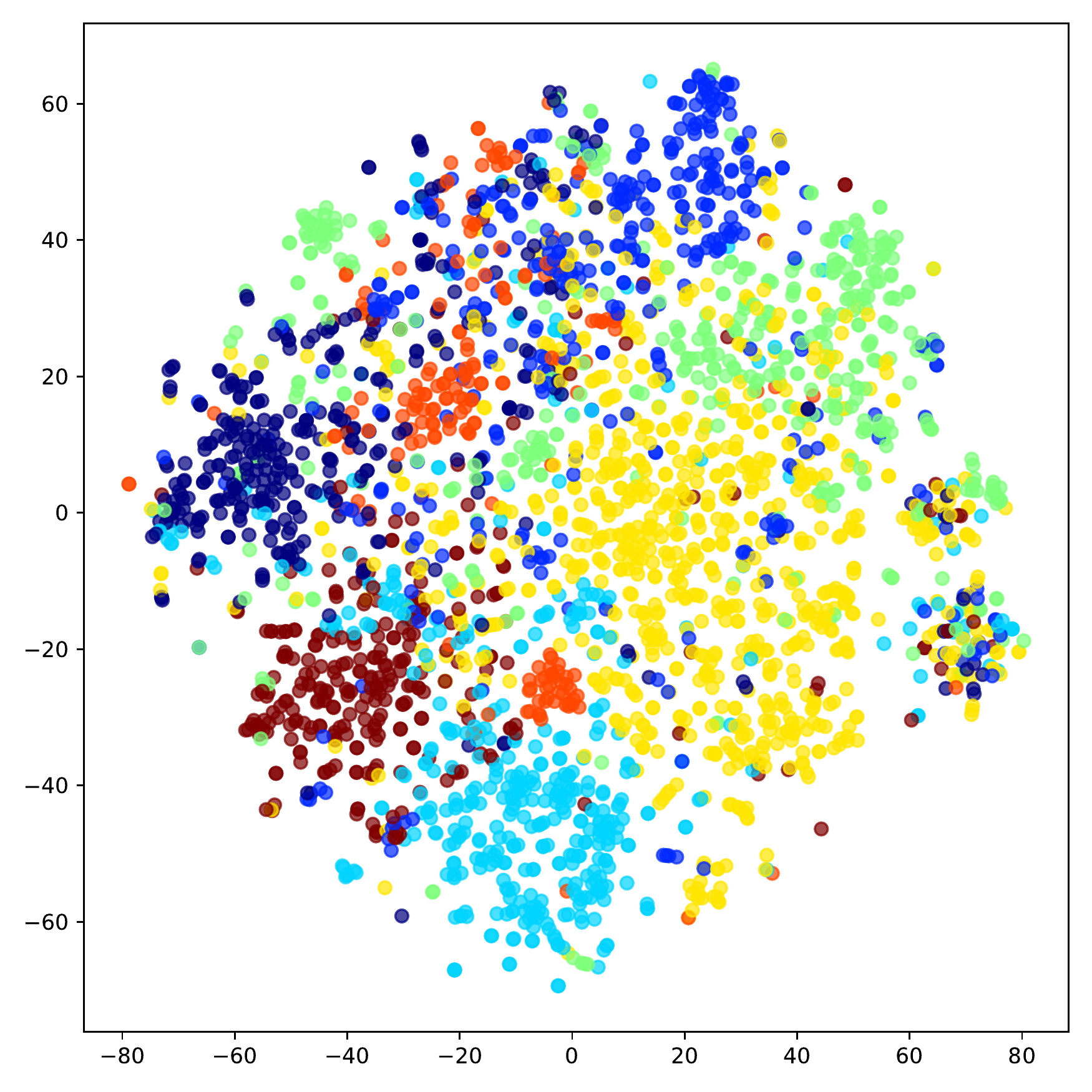}
  \caption{\footnotesize Left and center: t-SNE embeddings on source and target representations 
  computed with our DiGAE-1L for CoraML. Right: embeddings of orginal feature 
  vectors (colors indicate classes).}
  \label{fig:tsne-spaces}
\end{figure*}

We also explore the performance of an approach based on truncated SVD for our directed link prediction task.
For a  given train/validation/test split, we use the directed edges that are available to use for training for building
a \emph{partial} adjacency matrix $A_{p}$
of the underlying directed graph $G(V, E)$ and compute its truncated SVD,
retaining the singular triplets corresponding to its largest $k$ singular values,
$\textbf{A}_{p} \approx \textbf{U}_{k} \mathbf{\Sigma}_k \textbf{V}_{k}^\top$.
Then we consider the matrices for the source and target encodings of nodes
$\textbf{Z}_{S}= \textbf{U}_{k} \mathbf{\Sigma}_k^{1/2}$ and
$\textbf{Z}_{T} = \textbf{V}_{k} \mathbf{\Sigma}_k^{1/2}$
and use the asymmetric decoder from Equation (\ref{eq:directed-decoder}) for predicting directed links.
We use the same evaluation pipeline as in our GNN experiments and compute respective AUC and AP performance metrics for the graph reconstruction.
For truncated SVD we experimented with implementations based on \texttt{ARPACK} \cite{books/daglib/0000896} and randomized SVD \cite{halko2011finding} for
$k \in \{2^{i} | i=1, 2, \ldots, 7\}$ and we got very similar metrics for our input graphs (please refer to the Appendix for metrics tables). We repeated for $20$ random graph splits for each graph, $k$ and implementation combination. Figure \ref{fig:truncated-svd-baseline} summarizes the results for the \emph{largest} mean AUC and AP value for each graph, $k$ combination. We observe that for 16-dimensional encoding vectors we get best results in reconstruction in terms of mean AP: $82.83\%$ for CoraML and $69.88\%$ for CiteSeer. These are significantly lower than $94.10\%$ for CoraML and $92.57\%$ for CiteSeer that the ``neural'' source and target encodings our DiGAE-1L produces. SVD encodes only connectivity while DiGAE models, being GNNs, leverage both connectivity \emph{and} node features.
The truncated SVD baseline is essentially the HOPE idea \cite{ou2016asymmetric} with the proximity matrix being the adjacency matrix \textbf{A}.
For comparison and for the standard Katz proximity in HOPE, ${(\textbf{I} - \beta \textbf{A})}^{-1} \beta \textbf{A}$,
we computed \emph{largest} mean AP values in reconstruction over $k \in \{2^{i}|i=1,2,\ldots, 7\}$, $\beta=0.02$, applying $20$ random graph splits for each $k$. We get $83.87\%$ for CoraML and $67.29\%$ for CiteSeer: comparable to truncated SVD baseline results and significantly lower than the respective DiGAE-1L metrics.
We include tables with detailed results in the Appendix.  

\paragraph{\textbf{Clustering source and target encodings}} We used t-Distributed Stochastic Neighbor Embedding (t-SNE) \cite{van2008visualizing} for reducing the dimension of source and target representations of nodes produced by DiGAE in order to visualize them. Figure \ref{fig:tsne-spaces} illustrates a crisp separation of points with different class labels and this is true for both source and target vectors. This separation is not present in their feature space. This implies that DiGAE embeddings are promising inputs for clustering purposes. The fact that the clusters can be independently identified in both the source and target space opens up the possibility for exploring sets of points that share cluster identity in both spaces, as the core ones for a label. 
\paragraph{\textbf{Vector hub and authority scores}} Source and target representations are expected to be the (vector) surrogates respectively of (scalar) hub and authority scores. Table \ref{tab:correlation} confirms this intuition. A source vector of particularly large magnitude for a node $a$ will be more probable to yield large inner products with target vectors of other nodes $b$ in the decoder. This means that a directed  edge $a \mapsto b$ will be likely to appear, which will increase $a$'s outdegree. Similarly, a large magnitude for a node's target vector encourages other nodes to connect to it and increase its indegree. In turn, outdegrees are known to correlate to hub scores and indegrees to authority and PageRank scores.  
\begin{table}[ht]
\centering
\caption{\footnotesize Pearson correlation coefficients between the magnitudes of source and target vector 
encodings, and a collection of centrality scores (Hub/Authority and PageRank) and degrees 
(in/out), for all nodes in CoraML. Encodings were computed with DiGAE-1L.}
\label{tab:correlation}
\begin{tabular}{lrr}
\hline
           &   source magnitude &   target magnitude \\
\hline
 hub       &            \textbf{0.37} &            0.06 \\
 outdegree &            \textbf{0.82} &            0.12 \\
 authority &            0.05 &            \textbf{0.41} \\
 pagerank  &           -0.01 &            \textbf{0.48} \\
 indegree  &            0.08 &            \textbf{0.77} \\
\hline
\end{tabular}
\end{table}
\section{Conclusions and future work}
In this paper we present DiGAE, a new class of directed graph autoencoders that computes a pair of vector representations for each node. It exploits the asymmetry in input and output node degrees and further skews this by allowing exponents in scaling the features to enter as parameters. DiGAE outperforms state-of-the art GCN-based graph autoencoders on the directed link prediction task and can be an order of magnitude times faster in learning representations for CoraML and CiteSeer datasets. In future work, we plan to explore encoders that integrate scaling decisions that are \emph{local} to each node.

\newpage
\clearpage
\section{\huge{Appendix}} \label{appendix}
\rule{\textwidth}{2pt}\\\vspace{0.5cm}

We include the detailed algorithm for WL and the message passing view of a simplified GCN layer.
We then describe our coloring of pairs of node labels in directed graphs and its associated message passing scheme.

In order to emphasize the impact of $\alpha$ and $\beta$ hyperparameters on performance metrics for CoarML and CiteSeer datasets, we visualize AUC mean values, collected during hyperparameter tuning,
for the \emph{selected} model's $\eta$ and $d$ values. We also provide the detailed performance metrics table for the truncated SVD baseline for both standard and randomized SVD implementations
on which Figure 2 in the main text is based and the standard Katz proximity in HOPE is similarly explored next. An application of all models on small Web graphs from the WebKB collection follows.
We conclude with experiments over a larger dataset, Pubmed, in which we confirm the performance benefits of our DiGAE models and the optimization opportunities that parameters $\alpha$, $\beta$ offer, followed
by a short note connecting them to the spectral modifications they effect.

\subsection{Weisfeiler-Leman (WL) algorithm}
\label{sec:weisfeiler-leman-wl}
Algorithm \ref{alg:1-WL} provides a summary of the Weisfeiler-Leman (WL) algorithm.
\begin{algorithm}
  \begin{algorithmic}
    \State {\textbf{Input:}} Neighbor lists $\mathcal{N}(i)$, for all nodes $i \in [0, n)$
    \State {Ensure neighbor lists are lexicographically sorted: $\mathcal{N}(i) \leftarrow \texttt{sort}(\mathcal{N}(i))$, $\forall i \in [0, n)$}
  
    \State{Initialize: $k\leftarrow 0$; $l_i^{(k)} := \texttt{c}$ (same label),
      $h_i^{(k)} := \texttt{hash}(i)$, $\forall i \in [0, n)$;  $L^{(0)} := \{l_i^{(0)}: i \in [0, n)\}$}
    \While{True}
    \State{Hash the tuple of (sorted) neighbors' labels: $h_i^{(k+1)} := \texttt{hash}(<l_j^{(k)}: j \in \mathcal{N}(i)>)$}
    \State{List of unique hashes $\mathcal{H}$: $\mathcal{H} \leftarrow \texttt{sort}$ ([\{$h_i^{(k+1)}: i \in [0, n)$\}])}
    \State{Function $f$ maps items in $\mathcal{H}$ to labels: $f(\mathcal{H}_j) := j,  \forall j \in [0, size(\mathcal{H}))$}
    \State{Update labels: $l_i^{(k+1)} := f(h_i^{(k+1)}))$,  $\forall i \in [0, n)$}
    \State{Update set of labels: $L^{(k+1)} := \{l_i^{(k+1)}: i \in [0, n)\}$}
    \If{$L^{(k+1)} = L^{(k)}$}
    \State{\textbf{break}}
    \Else
    \State{$k \leftarrow k + 1$}
    \EndIf{}
    \EndWhile
    \State {\textbf{Output:}} $L^{(k)}$
\end{algorithmic}
\caption{Weisfeiler-Leman (WL) algorithm} \label{alg:1-WL}
\end{algorithm}

\subsection{GCNs as message passing systems}
\label{sec:gcns-as-message}

Algorithm \ref{alg:mp-GCN} describes the steps each node $i$ performs to update its 
current encoding $\textbf{x}_i$ given the set of its neighbors $\mathcal{N}(i)$ and 
a transformation matrix $\textbf{W}$ (which is identical for all nodes).

\begin{algorithm}
  \begin{algorithmic}
    \State {\textbf{Input:}} $\textbf{x}_i$, $\mathcal{N}(i)$, $\textbf{W}$
    \State{Set:  $\tilde{\mathcal{N}}(i) := \mathcal{N}(i) \cup \{i\}$ and $\texttt{deg}(i) := | \tilde{\mathcal{N}}(i) |$}
    \State{Transform: $\textbf{x}_i \leftarrow \textbf{W}^{\top} \textbf{x}_i$ }
    \State{Scale: $\textbf{x}_i \leftarrow \frac{1}{\sqrt{\texttt{deg}(i)}} \; \textbf{x}_i$}
    \State{Send: $\texttt{send}(\textbf{x}_i, j)$ \emph{to all nodes} $j \in \tilde{\mathcal{N}}(i)$}
    \State{Receive:  $\textbf{x}_j \leftarrow \texttt{receive}(j)$ \emph{from all nodes} $j \in \tilde{\mathcal{N}}(i)$}
    \State{Sum: $\textbf{x}_i \leftarrow \sum_{j \in \tilde{\mathcal{N}}(i)}\textbf{x}_j$}
    \State{Scale: $\textbf{x}_i \leftarrow \frac{1}{\sqrt{\texttt{deg}(i)}} \; \textbf{x}_i$}
    \State {\textbf{Output:}} $\textbf{x}_i$
\end{algorithmic}
\caption{Updating the enconding of node $i$ in a GCN layer} \label{alg:mp-GCN}
\end{algorithm}

\subsection{Coloring of pairs of node labels in directed graphs}
\label{sec:coloring-pairs-node}
Algorithm \ref{alg:d-WL} provides a summary of our extension of 1-WL for directed graphs.
\begin{algorithm}
  \begin{algorithmic}
    \State {\textbf{Input:}} Neighbor lists for outgoing and incoming edges, $\mathcal{N}^{+}(i)$ and  $\mathcal{N}^{-}(i)$ respectively, for all nodes $i \in [0, n)$
    \State {Ensure neighbor lists are lexicographically sorted: $\mathcal{N}^{+}(i) \leftarrow \texttt{sort}(\mathcal{N}^{+}(i))$,
      $\mathcal{N}^{-}(i) \leftarrow \texttt{sort}(\mathcal{N}^{-}(i))$,
      $\forall i \in [0, n)$}
    
    \State{Initialize: $t\leftarrow 0$;
      $c_{l, s}^{(t)} := \texttt{s}, c_{l,t}^{(t)} := \texttt{t}$;
      $h_{s}^{(t)}(i) = \texttt{HASH}(\texttt{s}), h_{t}^{(t)}(i) := \texttt{HASH}(\texttt{t})$, $\forall i \in [0, n)$;
      $L_{s}^{(0)} = L_{t}^{(0)} := \{\texttt{s}, \texttt{t}\}$}
    \While{True}
    \State{Hash \emph{source} label, \emph{target} labels of pointed nodes:
    $h_{s}^{(t+1)}(i) := \texttt{HASH}(c_{l, s}^{(t)}(i), <c_{l, t}^{(t)}(j): j \in \mathcal{N}^{+}(i)>)$}
  \State{Hash node \emph{target} label, \emph{source} labels of pointing nodes:
    $h_{t}^{(t+1)}(i) := \texttt{HASH}(c_{l, t}^{(t)}(i), <c_{l, s}^{(t)}(j): j \in \mathcal{N}^{-}(i)>)$}
    \State{List of unique hashes for nodes with \emph{source} role: $\mathcal{H}_{s}$:
      $\mathcal{H}_{s} \leftarrow \texttt{sort}$ ([\{$h_{s}^{(t+1)}(i): i \in [0, n)$\}])}
    \State{List of unique hashes for nodes with \emph{target} role: $\mathcal{H}_{t}$:
      $\mathcal{H}_{t} \leftarrow \texttt{sort}$ ([\{$h_{t}^{(t+1)}(i): i \in [0, n)$\}])}
    \State{Function $f_s$ maps items in $\mathcal{H}_s$ to labels: $f_s(\mathcal{H}_{s}(j)) := j,  \forall j \in [0, size(\mathcal{H}_s))$}
    \State{Function $f_t$ maps items in $\mathcal{H}_t$ to labels: $f_t(\mathcal{H}_{t}(j)) := j,  \forall j \in [0, size(\mathcal{H}_t))$}
    \State{Update \emph{source} labels: $c_{l, s}^{(t+1)}(i) := f_s(h_{s}^{(t+1)}(i)))$,  $\forall i \in [0, n)$}
    \State{Update \emph{target} labels: $c_{l, t}^{(t+1)}(i) := f_t(h_{t}^{(t+1)}(i)))$,  $\forall i \in [0, n)$}
    \State{Update sets of labels: $L_{s}^{(t+1)} := \{c_{l, s}^{(t+1)}(i): i \in [0, n)\}$; $L_{t}^{(t+1)} := \{c_{l,t}^{(t+1)}(i): i \in [0, n)\}$}
    \If{$L_{s}^{(t+1)} = L_{s}^{(t)}$ and $L_{t}^{(t+1)} = L_{t}^{(t)}$}
    \State{\textbf{break}}
    \Else
    \State{$t \leftarrow t + 1$}
    \EndIf{}
    \EndWhile
    \State {\textbf{Output:}} $L_s^{(t)}$, $L_t^{(t)}$ 
\end{algorithmic}
\caption{Coloring of pairs of node labels in directed graphs} \label{alg:d-WL}
\end{algorithm}

\subsection{DiGAE encoder layer as  a message passing system}
\label{sec:digae-encoder-layer}
We can express our DiGAE encoder layer in a message-passing fashion; Algorithm \ref{alg:mp-d-GCN} details the operations performed at each node $i$. 

\begin{algorithm}
  \begin{algorithmic}
    \State {\textbf{Input:}} $\textbf{s}_i$, $\textbf{t}_i$; $\mathcal{N}^{+}(i)$, $\mathcal{N}^{-}(i)$;  $\textbf{W}_S$, $\textbf{W}_T$; $\alpha$, $\beta$

    \State{Set:  $\tilde{\mathcal{N}}^{+}(i) := \mathcal{N}^{+}(i) \cup \{i\}$ and $\texttt{deg}^{+}(i) := | \tilde{\mathcal{N}}^{+}(i) |$;
    $\tilde{\mathcal{N}}^{-}(i) := \mathcal{N}^{-}(i) \cup \{i\}$ and $\texttt{deg}^{-}(i) := | \tilde{\mathcal{N}}^{-}(i) |$}

  \State{Transform: $\textbf{s}_i \leftarrow \textbf{W}_S^{\top} \textbf{s}_i$;  $\textbf{t}_i \leftarrow \textbf{W}_T^{\top} \textbf{t}_i$;}

  \State{Scale: $\textbf{s}_i \leftarrow \frac{1}{(\texttt{deg}^{+}(i))^{\beta}} \; \textbf{s}_i$;
  $\textbf{t}_i \leftarrow \frac{1}{(\texttt{deg}^{-}(i))^{\alpha}} \; \textbf{t}_i$}

  \State{Send: $\texttt{send}(\textbf{s}_i, j)$ \emph{to all nodes} $j \in \tilde{\mathcal{N}}^{+}(i)$;
    $\texttt{send}(\textbf{t}_i, j)$ \emph{to all nodes} $j \in \tilde{\mathcal{N}}^{-}(i)$}

  \State{Receive:  $\textbf{s}_j \leftarrow \texttt{receive}(j)$ \emph{from all nodes} $j \in \tilde{\mathcal{N}}^{-}(i)$;
    $\textbf{t}_j \leftarrow \texttt{receive}(j)$ \emph{from all nodes} $j \in \tilde{\mathcal{N}}^{+}(i)$}
 
  \State{Sum: $\textbf{s}_i \leftarrow
    \sum_{j \in \tilde{\mathcal{N}}^{+}(i)}\textbf{t}_j$; 
    $\textbf{t}_i \leftarrow
    \sum_{j \in \tilde{\mathcal{N}}^{-}(i)}\textbf{s}_j$}

  \State{Scale: $\textbf{s}_i \leftarrow \frac{1}{(\texttt{deg}^{+}(i))^{\beta}} \; \textbf{s}_i$;
    $\textbf{t}_i \leftarrow \frac{1}{(\texttt{deg}^{-}(i))^{\alpha}} \; \textbf{t}_i$}
  
   \State {\textbf{Output:}} $\textbf{s}_i$, $\textbf{t}_i$
\end{algorithmic}
\caption{} \label{alg:mp-d-GCN}
\end{algorithm}

\subsection{CoraML and CiteSeer datasets}

\subsubsection{Impact of $\alpha$ and $\beta$ on the performance for DiGAE models}
\label{sec:grid-search}
Figure \ref{fig:grid-search} lists the mean AUC scores of the \emph{DiGAE} model for various 
combinations of the hyperparameters $(\alpha,\beta)$.
Numbers inside the boxes are mean AUC values collected during hyperparameter tuning, for the $\eta$ and $d$ values of the \emph{selected} model.

\begin{figure}[ht]
  \centering
  \includegraphics[width=0.45\textwidth]{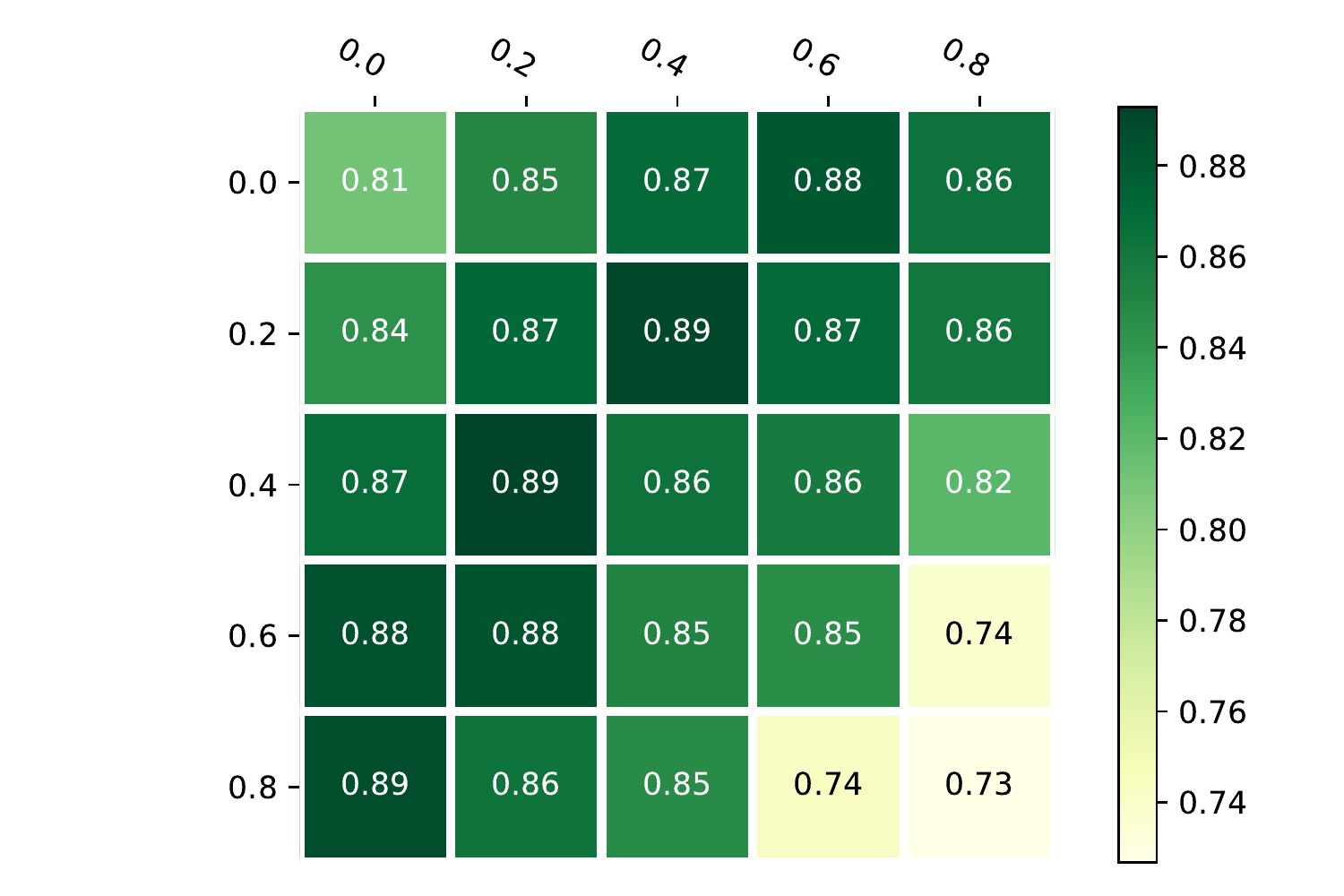}
  \includegraphics[width=0.45\textwidth]{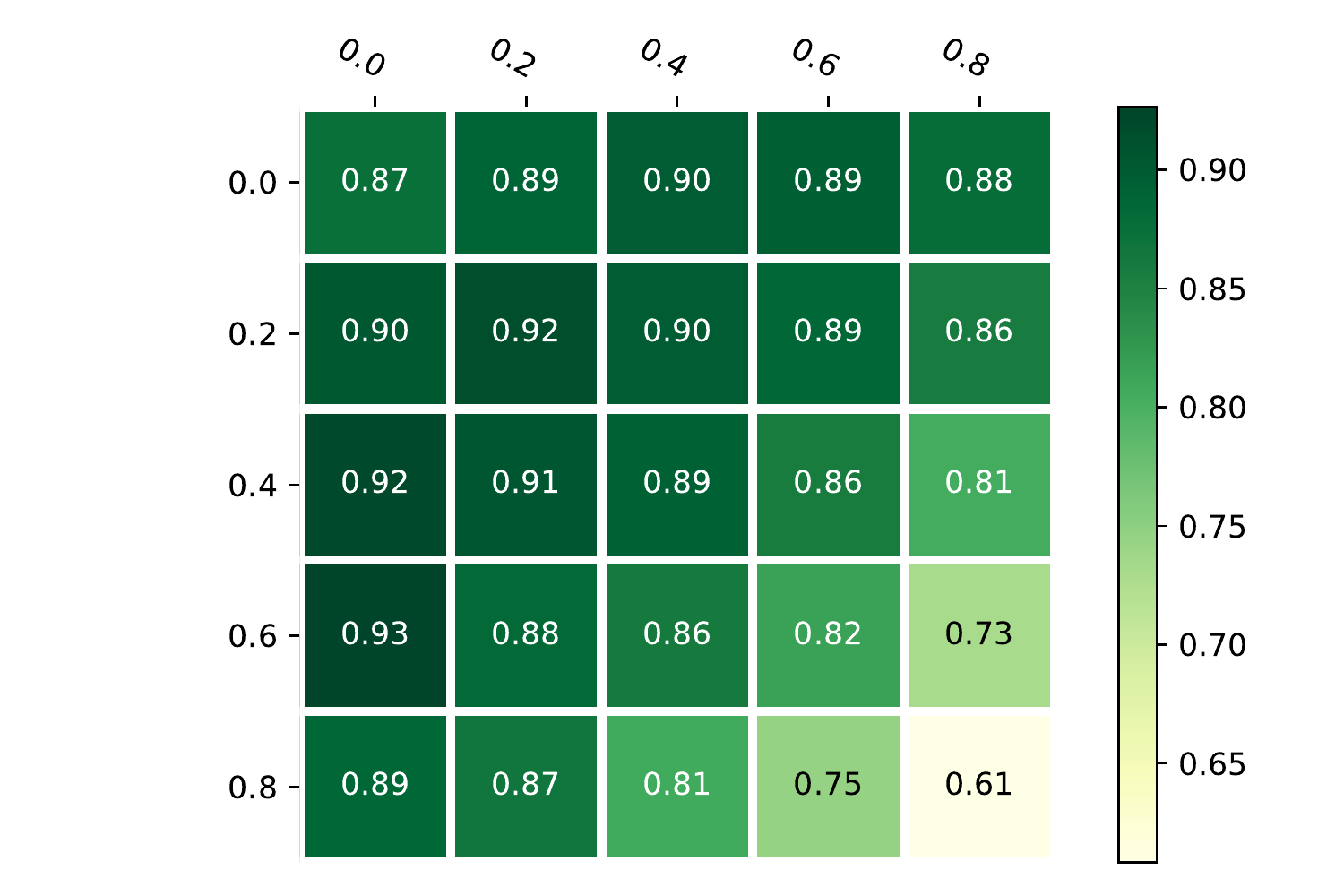}
  \caption{Grid search for CoraML (left) and CiteSeer (right) datasets to identify optimal 
    $\alpha$ (vertical) and $\beta$ (horizontal) for DiGAE model.
    Numbers inside the boxes indicate respective AUC scores.}
  \label{fig:grid-search}
\end{figure}

Figure \ref{fig:grid-search-single} is the analogous for Figure \ref{fig:grid-search}, but for the \emph{DiGAE-1L} model.

\begin{figure}[ht]
  \centering
  \includegraphics[width=0.45\textwidth]{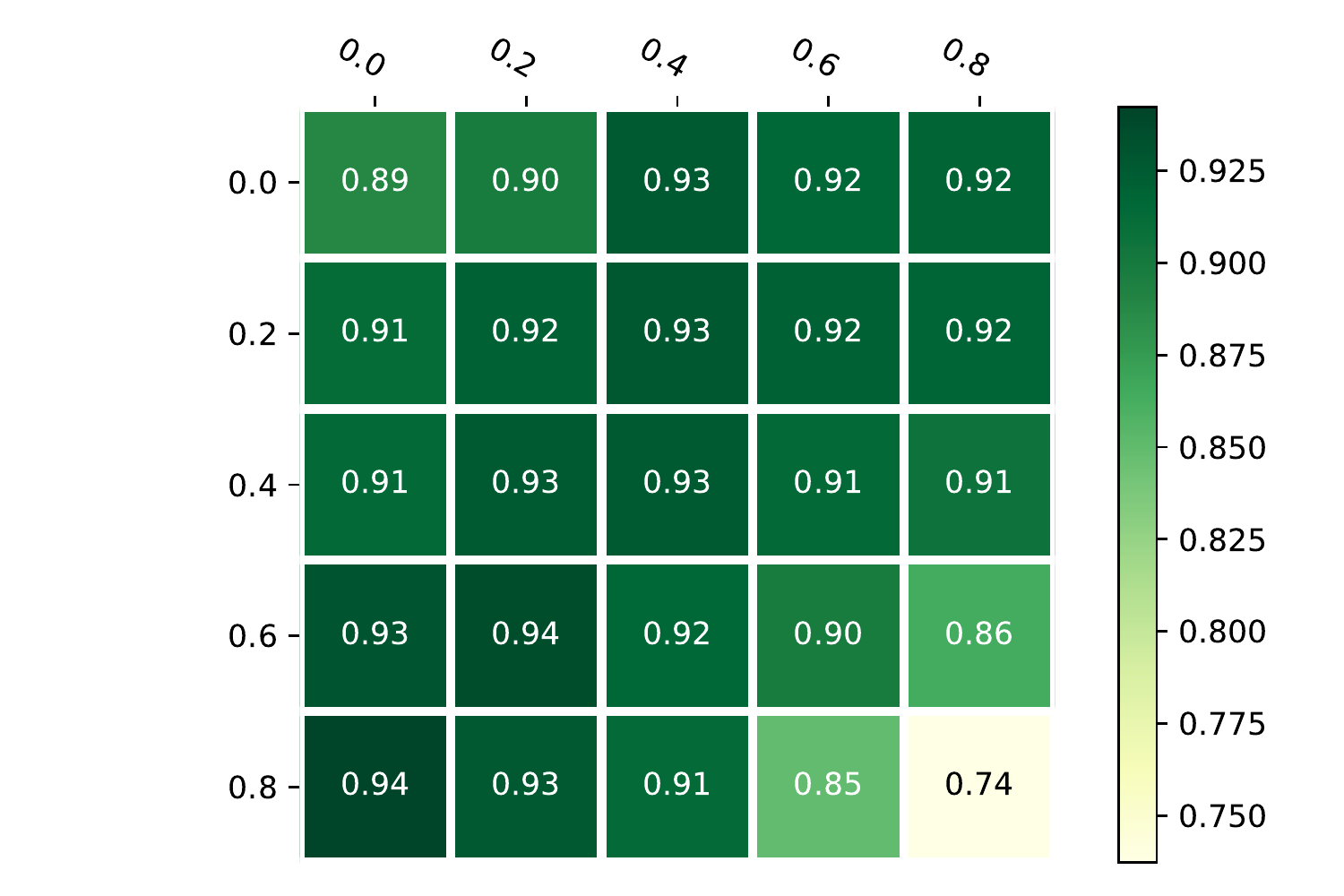}
  \includegraphics[width=0.45\textwidth]{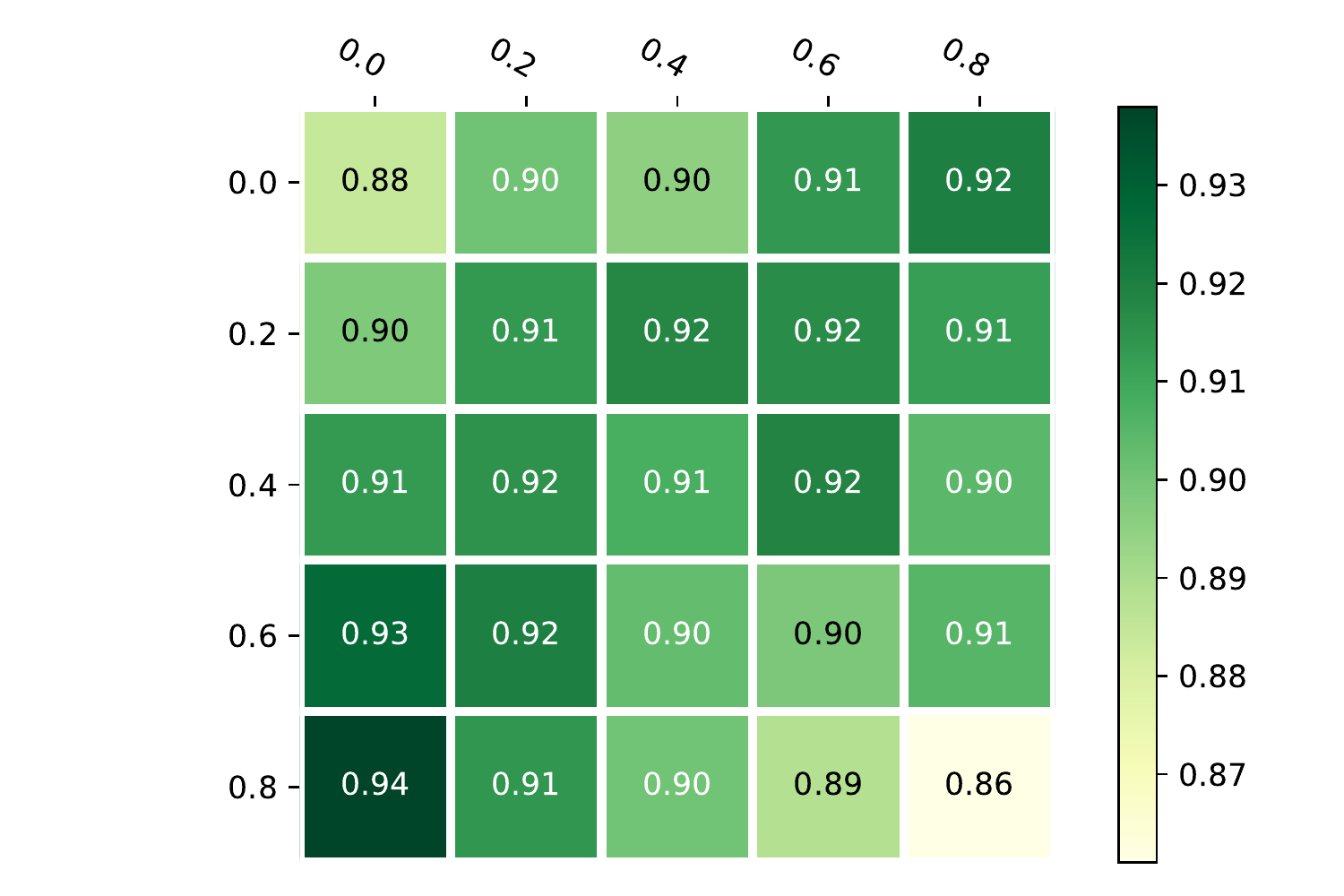}
  \caption{Grid search for CoraML (left) and CiteSeer (right) datasets to identify optimal 
    $\alpha$ (vertical) and $\beta$ (horizontal) for DiGAE-1L model.
    Numbers inside the boxes indicate respective AUC scores.}
  \label{fig:grid-search-single}
\end{figure}

\subsubsection{Truncated SVD baseline metrics}
Table \ref{tab:svd_table} details the metrics obtained in the truncated SVD baseline experiments.

\begin{table}[ht]
  \centering
  \caption{Truncated SVD baseline metrics (AUC and AP) for CoraML and CiteSeer datasets: two SVD implementations (standard and randomized) and
different encoding vector sizes: $k= 2, 4, 8, 16, 32, 64, 128$.}
\label{tab:svd_table}

\begin{tabular}{llrll}
\toprule
 Dataset &   Model &   k &            AUC &             AP \\
\midrule
  CoraML &     SVD &   2 & 74.41 +/- 1.45 & 75.05 +/- 1.47 \\
  CoraML &     SVD &   4 & 79.22 +/- 1.90 & 80.45 +/- 1.47 \\
  CoraML &     SVD &   8 & 79.94 +/- 1.60 & 81.78 +/- 1.31 \\
  CoraML &     SVD &  16 & 78.79 +/- 1.35 & 81.61 +/- 1.35 \\
  CoraML &     SVD &  32 & 77.91 +/- 1.71 & 81.84 +/- 1.47 \\
  CoraML &     SVD &  64 & 75.23 +/- 1.33 & 80.76 +/- 1.09 \\
  CoraML &     SVD & 128 & 70.79 +/- 1.70 & 78.03 +/- 1.12 \\
  \midrule
  CoraML & RandSVD &   2 & 72.62 +/- 2.64 & 76.11 +/- 2.02 \\
  CoraML & RandSVD &   4 & 75.85 +/- 1.75 & 78.91 +/- 1.35 \\
  CoraML & RandSVD &   8 & 78.09 +/- 1.08 & 81.36 +/- 1.15 \\
  CoraML & RandSVD &  \textbf{16} & 79.15 +/- 2.06 & \textbf{82.83 +/- 1.65} \\
  CoraML & RandSVD &  32 & 78.14 +/- 1.88 & 82.61 +/- 1.43 \\
  CoraML & RandSVD &  64 & 75.83 +/- 1.66 & 81.41 +/- 1.32 \\
  CoraML & RandSVD & 128 & 73.25 +/- 2.27 & 79.98 +/- 1.74 \\
  \midrule
CiteSeer &     SVD &   2 & 61.62 +/- 1.26 & 61.84 +/- 1.17 \\
CiteSeer &     SVD &   4 & 62.23 +/- 1.31 & 62.62 +/- 1.34 \\
CiteSeer &     SVD &   8 & 63.76 +/- 1.41 & 64.36 +/- 1.25 \\
CiteSeer &     SVD &  16 & 62.84 +/- 1.59 & 63.88 +/- 1.42 \\
CiteSeer &     SVD &  32 & 63.11 +/- 1.44 & 64.42 +/- 1.44 \\
CiteSeer &     SVD &  64 & 62.51 +/- 1.60 & 64.31 +/- 1.39 \\
CiteSeer &     SVD & 128 & 63.51 +/- 1.93 & 65.81 +/- 1.64 \\
\midrule
  CiteSeer & RandSVD &   2 & 63.45 +/- 1.61 & 66.13 +/- 1.57 \\
CiteSeer & RandSVD &   4 & 64.52 +/- 1.81 & 67.39 +/- 1.53 \\
CiteSeer & RandSVD &   8 & 67.34 +/- 1.67 & 69.72 +/- 1.38 \\
CiteSeer & RandSVD &  \textbf{16} & 66.86 +/- 1.54 & \textbf{69.88 +/- 1.23} \\
CiteSeer & RandSVD &  32 & 66.33 +/- 2.05 & 69.76 +/- 1.56 \\
CiteSeer & RandSVD &  64 & 65.61 +/- 2.50 & 69.40 +/- 1.87 \\
CiteSeer & RandSVD & 128 & 63.94 +/- 1.47 & 68.79 +/- 1.25 \\
\bottomrule
\end{tabular}
\end{table}

\subsubsection{HOPE baselines}
For comparison and for the standard Katz proximity ${(\textbf{I} - \beta \textbf{A})}^{-1} \beta \textbf{A}$ in HOPE \cite{ou2016asymmetric},
we computed AUC and AP metrics for $k \in \{2^{i}|i=1,2,\ldots, 7\}$, $\beta=0.02$, $20$ random splits, for CoraML and CiteSeer datasets.
Table \ref{tab:task_1-katz-index-proximity} details the metrics obtained in the experiments.
\begin{table}[ht]
  \begin{center}
    \caption{HOPE baseline metrics (AUC and AP) for CoraML and CiteSeer datasets and for
different encoding vector sizes: $k= 2, 4, 8, 16, 32, 64, 128$. Katz Index proximity has its $\beta$ parameter set to $0.02$.}
\label{tab:task_1-katz-index-proximity}
\begin{tabular}{llrll}
\toprule
 Dataset &     Model &   k &            AUC &             AP \\
\midrule
  CoraML & HOPE/Katz &   2 & 71.24 +/- 1.09 & 72.08 +/- 1.23 \\
  CoraML & HOPE/Katz &   4 & 79.27 +/- 1.53 & 80.18 +/- 1.35 \\
  CoraML & HOPE/Katz &   8 & 81.38 +/- 1.50 & 82.49 +/- 1.47 \\
  CoraML & HOPE/Katz &  16 & 79.47 +/- 1.40 & 82.05 +/- 1.18 \\
  CoraML & HOPE/Katz &  32 & 80.77 +/- 1.50 & \textbf{83.87 +/- 1.23} \\
  CoraML & HOPE/Katz &  64 & 78.08 +/- 1.41 & 82.45 +/- 1.16 \\
  CoraML & HOPE/Katz & 128 & 75.00 +/- 1.95 & 80.91 +/- 1.38 \\
  \midrule
CiteSeer & HOPE/Katz &   2 & 60.64 +/- 1.27 & 60.75 +/- 1.25 \\
CiteSeer & HOPE/Katz &   4 & 62.14 +/- 1.66 & 62.56 +/- 1.58 \\
CiteSeer & HOPE/Katz &   8 & 63.66 +/- 1.49 & 64.06 +/- 1.46 \\
CiteSeer & HOPE/Katz &  16 & 64.17 +/- 1.69 & 65.05 +/- 1.62 \\
CiteSeer & HOPE/Katz &  32 & 63.72 +/- 1.47 & 65.10 +/- 1.44 \\
CiteSeer & HOPE/Katz &  64 & 64.33 +/- 2.24 & 66.03 +/- 1.89 \\
CiteSeer & HOPE/Katz & 128 & 65.07 +/- 1.36 & \textbf{67.29 +/- 1.27} \\
\bottomrule
\end{tabular}
\end{center}
\end{table}

\subsection{Results on WebKB collection datasets}
We experiment with the following datasets from the WebKB collection \footnote{\url{http://www.cs.cmu.edu/afs/cs.cmu.edu/project/theo-11/www/wwkb/}}:
``Wisconsin'', ``Texas'' and ``Cornell''; their properties are in Table \ref{tab:webkb_datasets}.
In WebKB datasets, nodes represent Web pages and are manually classified into the five categories: student, project, course, staff, and faculty;
edges are hyperlinks between them. Node
features are the bag-of-words representation of the Web pages.
We apply the same pipeline as for CoraML and CiteSeer citation datasets.
In Table \ref{tab:task_1_webkb_features} we report the peformance metrics for the case the node features are input in model training.
For the feature-less configurations, performance of baselines is reported in Table \ref{tab:task_1_webkb}.

\begin{table}
  \begin{center}
\caption{WebKB datasets.}   \label{tab:webkb_datasets}
  \begin{tabular}{lccc}
    \hline
    Dataset   &   Number of nodes &   Number of edges &   Number of features \\
    \hline
    Texas      &               183 &              325  &                 1703 \\
    Cornell    &               183 &              298  &                 1703 \\
    Wisconsin  &               251 &              515  &                 1703 \\
    \hline
  \end{tabular}
  \end{center}
\end{table}

\begin{table}[ht]
  \begin{center}
  \caption{General directed link prediction for the \emph{WebKB} graphs (feature-based configurations). }
\label{tab:task_1_webkb_features}
  \begin{tabular}{lllll}
    \toprule
    Dataset &                Model &             AUC &              AP &          Time (secs) \\
\midrule
    Wisconsin &         DiGAE (ours) &  73.95 +/- 7.55 &  79.64 +/- 6.31 & 1.12 +/- 0.04 \\
    Wisconsin & DiGAE 1-Layer (ours) &  \textbf{81.14 +/- 7.36} &  \textbf{85.60 +/- 5.03} & 0.82 +/- 0.03 \\
    Wisconsin &          Gravity GAE &  73.37 +/- 4.19 &  76.73 +/- 3.97 & 0.70 +/- 0.10 \\
    Wisconsin &    Source/Target GAE &  72.73 +/- 5.31 &  78.40 +/- 4.17 & 0.56 +/- 0.04 \\
    Wisconsin &         Standard GAE &  67.84 +/- 5.02 &  75.34 +/- 4.28 & 0.51 +/- 0.10 \\
    \midrule                           
    Texas &         DiGAE (ours) & 64.79 +/- 17.24 & 74.38 +/- 12.99 & 0.93 +/- 0.03 \\
    Texas & DiGAE 1-Layer (ours) &  \textbf{70.58 +/- 8.16} &  \textbf{76.97 +/- 7.82} & 0.69 +/- 0.03 \\
    Texas &          Gravity GAE &  64.14 +/- 7.98 &  71.38 +/- 7.41 & 0.56 +/- 0.08 \\
    Texas &    Source/Target GAE &  63.50 +/- 7.41 &  68.42 +/- 6.72 & 0.47 +/- 0.05 \\
    Texas &         Standard GAE &  50.80 +/- 7.93 &  62.28 +/- 5.60 & 0.46 +/- 0.05 \\
    \midrule
    Cornell &         DiGAE (ours) & \textbf{69.69 +/- 12.07} &  \textbf{77.14 +/- 9.77} & 0.90 +/- 0.03 \\
    Cornell & DiGAE 1-Layer (ours) & 61.96 +/- 10.27 & 70.69 +/- 10.26 & 0.65 +/- 0.02 \\
    Cornell &          Gravity GAE &  66.24 +/- 4.84 &  69.10 +/- 5.01 & 0.59 +/- 0.09 \\
    Cornell &    Source/Target GAE &  62.82 +/- 5.75 &  69.42 +/- 5.25 & 0.47 +/- 0.05 \\
    Cornell &         Standard GAE &  57.49 +/- 7.78 &  68.72 +/- 6.54 & 0.47 +/- 0.05 \\
\bottomrule
  \end{tabular}
  \end{center}
\end{table}

\begin{table}[ht]
  \caption{General directed link prediction for the \emph{WebKB} graphs (feature-less configurations).}
  \label{tab:task_1_webkb}
  \begin{center}
  \begin{tabular}{llll}
    \toprule
    Dataset &             Model &            AUC &             AP \\
\midrule
    Wisconsin &       Gravity GAE & 67.04 +/- 5.35 & 74.41 +/- 4.42 \\
    Wisconsin & Source/Target GAE & 72.46 +/- 5.89 & 77.05 +/- 5.68 \\
    Wisconsin &      Standard GAE & 65.89 +/- 5.28 & 74.00 +/- 4.13 \\
    \midrule
    Texas &       Gravity GAE & 65.53 +/- 7.30 & 70.77 +/- 6.47 \\
    Texas & Source/Target GAE & 62.37 +/- 5.24 & 67.58 +/- 4.13 \\
    Texas &      Standard GAE & 53.81 +/- 6.61 & 64.24 +/- 5.27 \\
    \midrule
    Cornell &       Gravity GAE & 69.58 +/- 5.26 & 73.41 +/- 5.07 \\
    Cornell & Source/Target GAE & 65.14 +/- 5.71 & 69.90 +/- 5.18 \\
    Cornell &      Standard GAE & 60.26 +/- 7.96 & 69.21 +/- 6.91 \\
    \bottomrule
  \end{tabular}
  \end{center}
\end{table}

Likewise for citation datasets,  both DiGAE models perform better in mean AUC and AP for all WebKB networks.
Note however that performance metrics for any model over different splits for these small graphs have large standard deviations, which does not provide a clean performance ranking of the models.

\subsection{Results on Pubmed dataset}

We experiment with the Pubmed dataset (19,717 nodes, 44,338 edges, 500 features) which is a citation network pertaining to diabetes classified into one of three classes (``Diabetes Mellitus, Experimental'', ``Diabetes Mellitus Type 1'', ``Diabetes Mellitus Type 2'') \footnote{\url{https://linqs.soe.ucsc.edu/data}}.

We employ the same grid search strategy for hyperparameter tuning and training setup as for other citation datasets.
We report on the mean AUC and AP metrics and their respective timings over a series of ten experiments for each selected model, with one hot-encoding of the nodes (feature-less configurations), in Table \ref{tab:task_1-pubmed-feature-less}. DiGAE models consistently and significantly outperform other baselines by margins which are in the $7\%$ to $13\%$ range for mean AUC and in the $4\%$ to $8\%$ range for mean AP; they are also considerably faster. DiGAE-1L in particular is the fastest overall by a factor close or exceeding $\times 10$ and performs best overall in the mean AUC and AP metrics.

\begin{table}[ht]
  \begin{center}
    \caption{General directed link prediction for \emph{Pubmed} dataset (feature-less configurations).}
\label{tab:task_1-pubmed-feature-less}

\begin{tabular}{lllll}
\toprule
Dataset &                 Model &             AUC &              AP &             Time (secs) \\
\midrule
 Pubmed &          DiGAE (ours) &  92.09 +/- 1.92 &  91.08 +/- 1.22 &   98.62 +/- 2.75 \\
 Pubmed &  DiGAE 1-Layer (ours) &  \textbf{94.35 +/- 0.37} &  \textbf{93.16 +/- 0.48} &   45.49 +/- 0.48 \\
 Pubmed &           Gravity GAE &  84.28 +/- 0.59 &  86.60 +/- 0.58 &  560.09 +/- 9.94 \\
 Pubmed &     Source/Target GAE &  85.01 +/- 1.36 &  87.28 +/- 0.60 &  418.40 +/- 2.13 \\
 Pubmed &          Standard GAE &  81.43 +/- 0.32 &  85.88 +/- 0.31 &  421.56 +/- 5.95 \\
\bottomrule
\end{tabular}

\end{center}
\end{table}

\subsubsection{Impact of $\alpha$ and $\beta$ on the performance for DiGAE models}
In Figure \ref{fig:pubmed-grid-search-feature-less} we list the mean AUC scores of the \emph{DiGAE} models for various 
combinations of hyperparameters $(\alpha,\beta)$ for the Pubmed dataset. Numbers inside the boxes are mean AUC values collected during hyperparameter tuning, for the $\eta$ and $d$ values of the \emph{selected} model.
We confirm the flexibility to further optimize performance metrics by suitably tuning node degrees' exponentiation parameters $\alpha$ and $\beta$.

\begin{figure}[ht]
  \centering
  \includegraphics[width=0.45\textwidth]{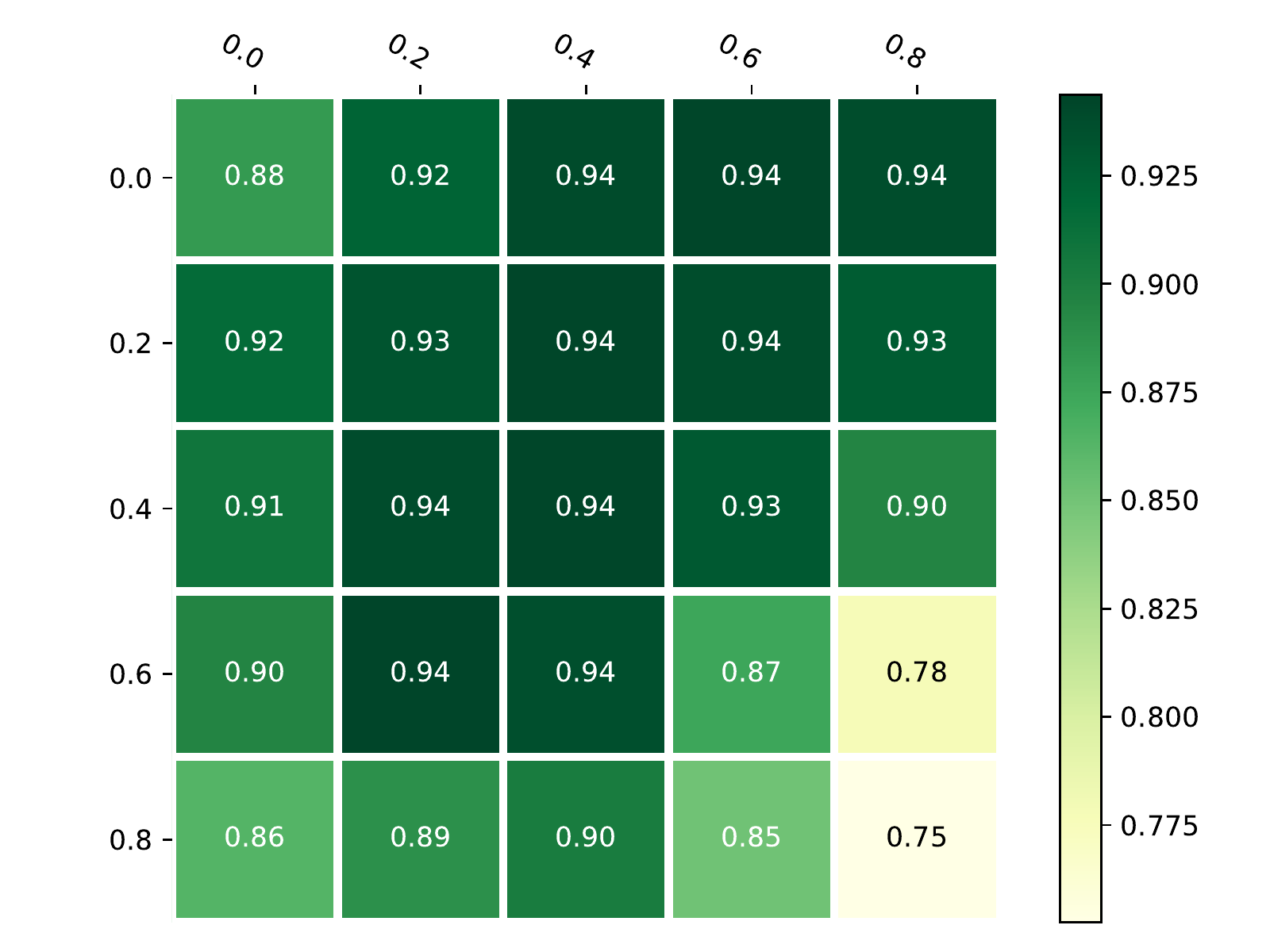}
  \includegraphics[width=0.45\textwidth]{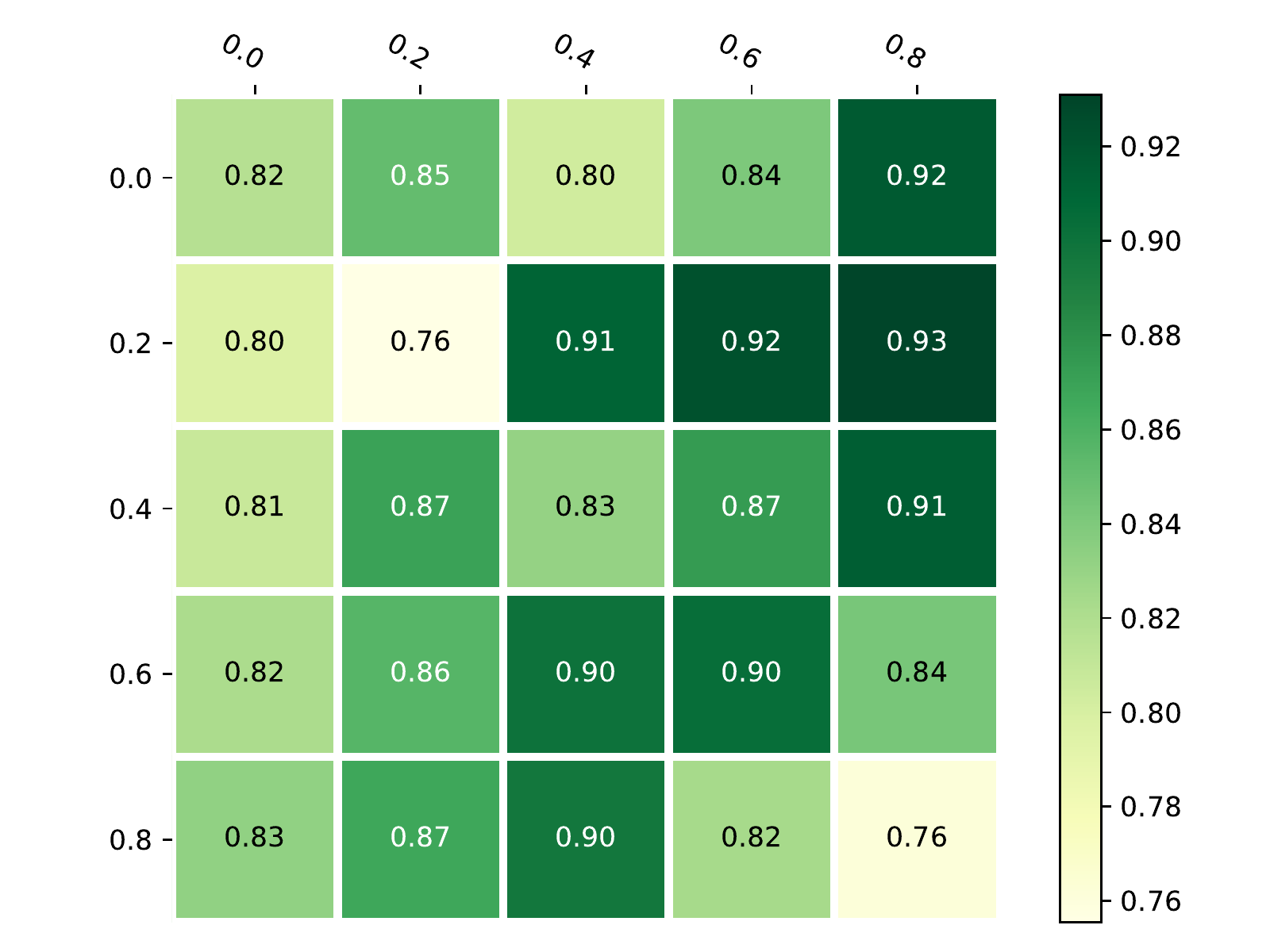}
  \caption{Grid search for Pubmed dataset to identify optimal 
    $\alpha$ (vertical) and $\beta$ (horizontal) for DiGAE-1L (left pane) and DiGAE (right pane) models.
    Numbers inside the boxes indicate respective AUC scores.}
  \label{fig:pubmed-grid-search-feature-less}
\end{figure}

\subsection{A note on the role of parameters $\alpha$ and $\beta$.}
By varying exponent parameters $\alpha$ and $\beta$ we modify the spectrum (singular values) of our multiplication matrices $\hat{\textbf{A}}$, $\hat{\textbf{{A}}}^{\top}$ (please see Equations (6) in the text). In particular for smaller parameter values, the scaling role of the node degrees $d$ is supressed (and in the limit of $0$ it vanishes: $1/d^{0}=1$) and the spectrum max shifts to larger values while for larger parameters the spectrum range shrinks. Spectrum shrinking will typically lead to smaller magnitudes for source and target encodings from Equations (6) (and their respective inner products) and so the sigmoid function in the decoder can fail to predict links: this is consistent with the particularly low performance metrics when both $\alpha$ and $\beta$ are large. Please see the spectrum for CoraML's $\hat{\textbf{A}}$ for $\alpha,\beta\in\{0.3, 0.5, 0.8\}, \alpha=\beta$ in Figure \ref{fig:singular_values}.
\begin{figure}[ht]
  \centering
  \includegraphics[width=0.7\textwidth]{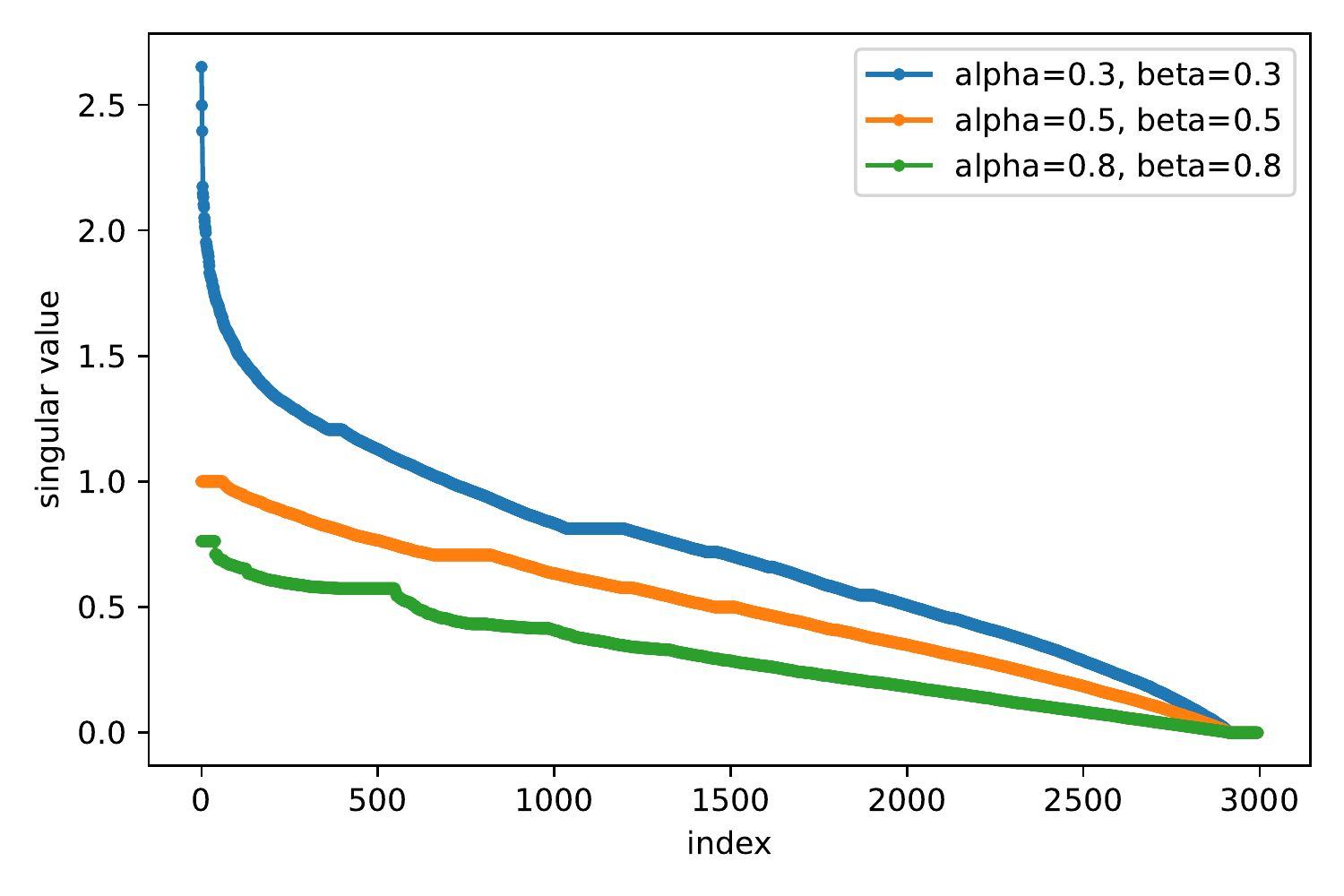}
  \caption{Spectrum for CoraML's $\hat{\textbf{A}}$ for $\alpha,\beta\in\{0.3, 0.5, 0.8\}, \alpha=\beta$.}
  \label{fig:singular_values}
\end{figure}
From the local smoothing perspective around a node $i$, smaller $\alpha$'s reinforce the role of "authority" nodes $j$ pointed by $i$ (i.e. nodes with large indegree $\texttt{deg}^{-}(j)$) in updating source encoding $\textbf{s}_i$ from $\textbf{t}_j$'s (contributing terms analogous to 
$\frac{1}{ \texttt{deg}^{-}(j)^{\alpha}} \textbf{t}_j$). Similarly,  smaller $\beta$'s reinforce the role of "hub" nodes $j$ pointing to $i$ (i.e. nodes with large outdegree $\texttt{deg}^{+}(j)$) in updating target encoding $\textbf{t}_i$ from $\textbf{s}_j$'s (contibuting terms analogous to 
$\frac{1}{\texttt{deg}^{+}(j)^{\beta}} \textbf{s}_j$). For a deeper theoretical analysis of the impact of the parameters on the spectum, the simplified linear case (as in ``Simplifying Graph Convolutional Networks'', Wu et al (2019), there for the impact of self-links) could be a starting point and this is deferred for future research.

\newpage
\clearpage


\end{document}